\newcommand{\modelname}{HUMPA }
\definecolor{transgray}{gray}{0.92}
\setlist[itemize]{left=2pt}
\def\eqref#1{equation~\ref{#1}}
\def\1{\bm{1}}
\DeclareMathAlphabet{\mathsfit}{\encodingdefault}{\sfdefault}{m}{sl}
\SetMathAlphabet{\mathsfit}{bold}{\encodingdefault}{\sfdefault}{bx}{n}
\def\gD{{\mathcal{D}}}
\def\gL{{\mathcal{L}}}
\def\gP{{\mathcal{P}}}
\def\gR{{\mathcal{R}}}
\def\gX{{\mathcal{X}}}
\def\gY{{\mathcal{Y}}}
\def\sD{{\mathbb{D}}}
\def\sP{{\mathbb{P}}}
\newcommand{\E}{\mathbb{E}}
\newcommand{\softmax}{\mathrm{softmax}}
\DeclareMathOperator*{\argmax}{arg\,max}
\newcommand{\best}[1]{\textbf{#1}}
\newcommand{\worst}[1]{\underline{#1}}
\DeclareMathOperator*{\reff}{ref}
\theoremstyle{definition} 
\theoremstyle{remark}     
\theoremstyle{remark}     
\theoremstyle{plain}      \newtheorem{theorem}{Theorem}[section]
\theoremstyle{plain}      \newtheorem{assumption}[theorem]{Assumption}
\theoremstyle{plain}      
\theoremstyle{plain}      
\theoremstyle{plain}      \newtheorem{corollary}[theorem]{Corollary}
\theoremstyle{plain}      \newtheorem{lemma}[theorem]{Lemma}
\title{Humanizing the Machine: Proxy Attacks to Mislead LLM Detectors}
\author{Tianchun Wang\textsuperscript{1}$^\ast$,
        Yuanzhou Chen\textsuperscript{2}$^\ast$, 
        Zichuan Liu\textsuperscript{3}, 
        Zhanwen Chen\textsuperscript{4},
         \textbf{Haifeng Chen}\textsuperscript{5}, \\
        \textbf{Xiang Zhang}\textsuperscript{1},
        \textbf{Wei Cheng}\textsuperscript{5}$^\dagger$\\
\textsuperscript{1}The Pennsylvania State University, \textsuperscript{2}University of California, Los Angeles, \\ \textsuperscript{3}Nanjing University,
\textsuperscript{4}University of Virginia, \textsuperscript{5}NEC Laboratories America\\
\texttt{\{tkw5356, xzz89\}@psu.edu},  \texttt{adrianchen@cs.ucla.edu},\\
\texttt{zichuanliu@smail.nju.edu.cn},
\texttt{pct4et@virginia.edu},\\
  \texttt{\{haifeng, weicheng\}@nec-labs.com}
}
\newcommand{\bgm}{l}
\newcommand{\smm}{s}
\begin{document}

\maketitle
{\def\thefootnote{$\ast$}\footnotetext{Authors contributed equally.}}\def\thefootnote{\arabic{footnote}}
\def\thefootnote{$\dagger$}\footnotetext{Corresponding author.}\def\thefootnote{\arabic{footnote}}
\begin{abstract}
The advent of large language models (LLMs) has revolutionized the field of text generation, producing outputs that closely mimic human-like writing. Although academic and industrial institutions have developed detectors to prevent the malicious usage of LLM-generated texts, other research has doubt about the robustness of these systems. To stress test these detectors, we introduce a \textbf{hum}anized \textbf{p}roxy-\textbf{a}ttack (HUMPA) strategy that effortlessly compromises LLMs, causing them to produce outputs that align with human-written text and mislead detection systems. Our method attacks the source model by leveraging a reinforcement learning (RL) fine-tuned humanized small language model (SLM) in the decoding phase. Through an in-depth analysis, we demonstrate that our attack strategy is capable of generating responses that are indistinguishable to detectors, preventing them from differentiating between machine-generated and human-written text.
We conduct systematic evaluations on extensive datasets using proxy-attacked open-source models, including Llama2-13B, Llama3-70B, and Mixtral-8$\times$7B in both white- and black-box settings. Our findings show that the proxy-attack strategy effectively deceives the leading detectors, resulting in an average AUROC drop of 70.4\% across multiple datasets, with a maximum drop of 95.0\% on a single dataset. Furthermore, in cross-discipline scenarios, our strategy also bypasses these detectors, leading to a significant relative decrease of up to 90.9\%, while in cross-language scenario, the drop reaches 91.3\%. Despite our proxy-attack strategy successfully bypassing the detectors with such significant relative drops, we find that the generation quality of the attacked models remains preserved, even within a modest utility budget, when compared to the text produced by the original, unattacked source model.
\end{abstract}
\begin{center}
    \textcolor{red}{\textbf{WARNING: This paper contains AI-generated text that is offensive in nature.}}
\end{center}
\section{Introduction}
\label{sec:intro}
 Large language models (LLMs) such as ChatGPT~\citep{openai2023chatgpt4}, Llama~\citep{touvron2023llama,touvron2023llama2, meta2024introducing} and Mixtral~\citep{jiang2024mixtral}, have significantly influenced both the industrial and academic landscapes, with vast applications in news reporting, story writing, and academic research. However, there are growing concerns surrounding the misuse of these models, including the fabrication of fake news~\citep{sun2024exploring}, the emergency of  malicious content on website~\citep{radivojevic2024human}, and the arise of plagiarism~\citep{khalil2023will}. Concerns regarding misinformation, plagiarism and copyright ~\citep{gao2022comparing,else2023chatgpt} have prompted some scientific institutions to take a stance on the use of AI-generated content in research papers. In response to these challenges, there is an increasing emphasis on developing robust and reliable detection methods~\citep{sadasivan2023can,lu2023large,valiaiev2024detection} for machine-generated texts.

 The methods for detecting AI-generated text ranging from watermarking~\citep{zhao2023provable,kirchenbauer2023watermark,singh2023new}, training-based methods for binary classifiers~\citep{chen2023gpt,guo2023close,yu2023gpt, li2023deepfake} to zero-shot methods~\citep{bakhtin2019real,solaiman2019release,uchendu2020authorship,bao2023fast,mitchell2023detectgpt,yang2023dna}. While these detectors may provide temporary reassurance, their reliability and robustness for detecting machine-generated text remain uncertain. Most recent studies have reported detectors are vulnerable when facing attacks~\citep{sadasivan2023can,krishna2024paraphrasing,zhou2024humanizing,lu2023large,jovanovic2024watermark,nicks2024language,creo2024evading}. The recent research~\citep{nicks2024language} has revealed that detectors are vulnerable when they are targeted for optimization, meaning language models can be fine-tuned through reinforcement learning to make the texts generated by the fine-tuned model evade detection. However, this paradigm is feasible only when the language model is relatively small and weak (e.g., 7B). For larger and stronger models (e.g., 70B), the fine-tuning process becomes significantly more costly. As shown in the paper~\citep{nicks2024language}, the generated-text quality by the small language model decrease further after fine-tuning, so the fine-tuning parameters (such as $\beta$ in DPO~\citep{rafailov2024direct}) must be carefully set to balance the evasion performance and the generation quality during fine-tuning. Most importantly, it is typically impossible for a hacker to access, fine-tune, and re-deploy the source model within a detection system.
\begin{wrapfigure}{r}{0.4\textwidth} \vspace{-0.3cm}
\centering
\includegraphics[width=0.17\textheight]{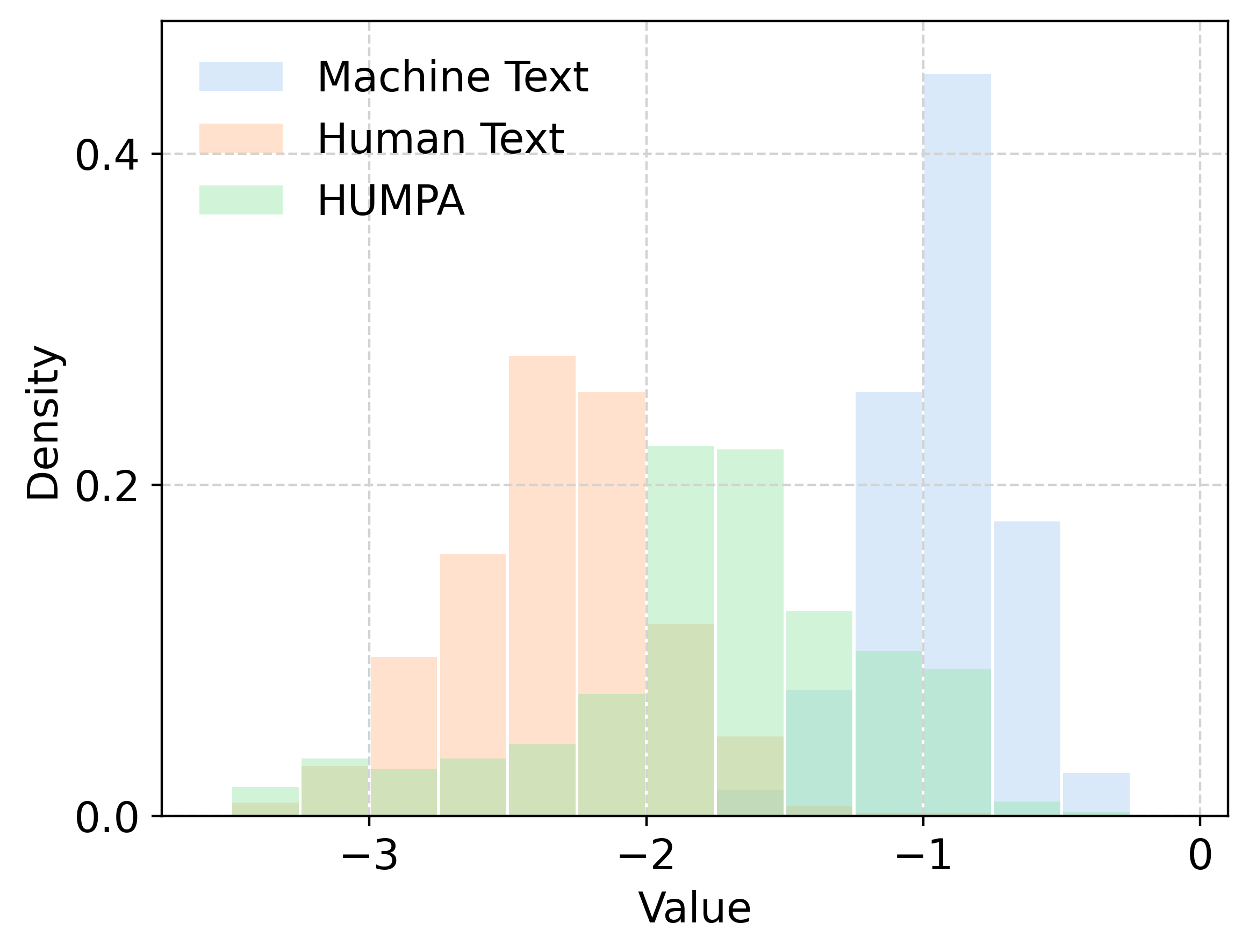}\vspace{-0.4cm}
\caption{The probability distribution of the source model (Llama3-70B) and the HUMPA-attacked source model using humanized Llama3-8B, evaluated on passages from the OpenWebText dataset. After the attack, the distribution aligns more closely with that of human-written text.}\vspace{-0.1cm}
\label{fig:density_}
\end{wrapfigure}

\vspace{-0.3cm}
Therefore, there is an urgent need to develop more practical solutions to evade text detection. In our study, we seek to address the following pivotal inquiries: 1) Can we devise feasible, cost-effective strategy to attack the LLM aligning to the distribution of human-written text? 2) Does our attack strategy bypass the leading detectors while preserving the text generation quality? 3) Does our attack strategy successfully deceive the detectors to the same extent as a direct attack on the source model? To answer these questions, in this work, we introduce an innovative yet straightforward attack paradigm that aligns the distribution of the source model with that of human-written text by employing a RL fine-tuned humanized SLM, dubbed \modelname(\textbf{hum}anized \textbf{p}roxy \textbf{a}ttack). Our approach aims to implement a lightweight attack on the source model by leveraging the distribution shift of a humanized SLM, adapting it to the target source model without requiring fine-tuning of the source model itself (as illustrated in Figure \ref{fig:density_} and \ref{fig:framework}). We provide an in-depth analysis, demonstrating that the \modelname attack strategy effectively circumvents detectors by making it more challenging to distinguish between machine-generated text and human-written content. We theoretically analyze fine-tuning language models with DPO based on the preference data constructed with the detectors, and demonstrate
that attacking with a proxy humanized SLM is comparable to directly attacking the source LLM. Through extensive experiments using proxy-attacked open-source models, including Llama2-13B, Llama3-70B, and Mixtral-8$\times$7B, in both white-box and black-box settings, our HUMPA-attack strategy consistently deceives top detectors, resulting in an average AUROC drop of 70.4\% across multiple datasets, with a maximum drop of 95.0\% on a single dataset. In cross-discipline scenarios, \modelname exhibits a significant relative decrease in detection accuracy of up to 90.9\%, while in cross-language scenarios, the reduction reaches 91.3\%. Notably, the generation quality of the attacked models remains well-preserved, maintaining a reasonable utility budget compared to the output from the unattacked source models.

In summary, we address our contributions as follows:
\begin{itemize}[itemsep=0.8pt,topsep=0pt,parsep=0pt]
\item We propose an attack strategy, HUMPA, which contaminates the source model by aligning its distribution to resemble human-written text, using a fine-tuned, humanized small language model.
\item By providing an in-depth analysis for the attacking process, we theoretically justify that bringing an effectively attacked small model via \modelname is
equivalent to attacking the large model.
\item Our systematic evaluation across extensive datasets confirms that detectors are consistently deceived by proxy-attacked source models in both white-box and black-box settings. Additionally, detectors can be misled by a humanized SLM trained on cross-domain data sources. Despite the evasion, the quality of the texts generated by the compromised models remains preserved. 
\end{itemize}
\begin{figure}[hpt]
    \centering \includegraphics[width=0.85\textwidth]{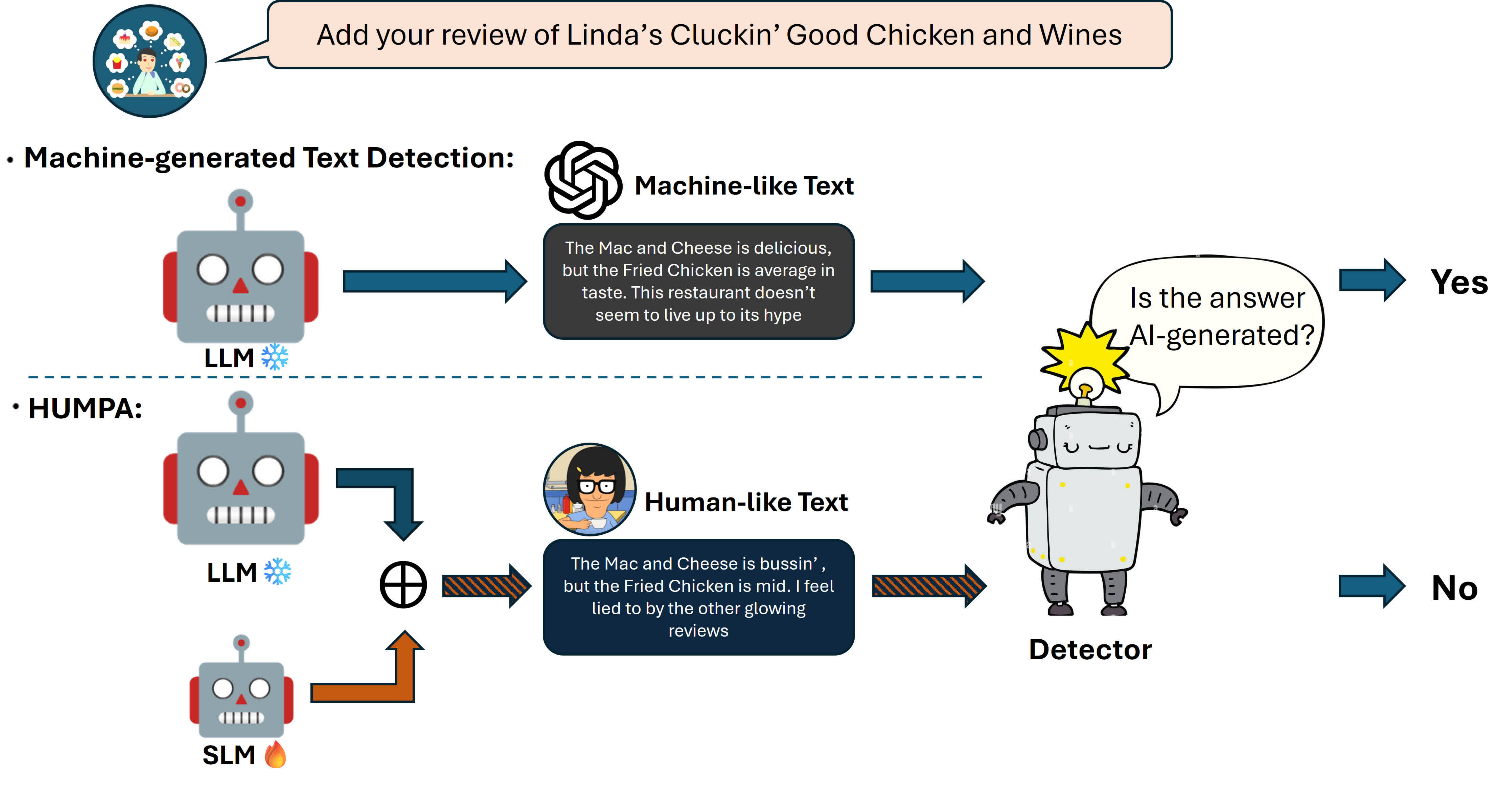}\vspace{-0.01cm}  
    \caption{Overview of the humanized proxy attack. The attack overrides a large model’s predictions by using a fine-tuned, smaller, humanized model during decoding. As a result, the LLM produces more human-like text that can deceive detection systems.}
    \label{fig:framework}  
    \vspace{-0.35cm}
\end{figure}
\vspace{-0.3cm}
\section{Related Work}
\vspace{-0.2cm}
\textbf{Machine-generated Text Detection.} The widespread adoption of large language models (LLMs) underscores the necessity for reliable methodologies to detect the texts generated by these models. The detection is aiming to ascertain if a given text  generated by a language model on the condition that the model is known (white-box) or unknown (black-box)~\citep{wu2023survey}. 
In the era of LLMs, 
recent efforts are focused on training a binary classifier using model-generated texts to distinguish between LLM-generated and human-written content~\citep{verma2023ghostbuster,venkatraman2023gpt}. However, these methods usually result in limited generalization capabilities
when exposed to out-of-distribution data~\citep{pu2023deepfake}. Zero-shot approaches detect
LLM-generated text by comparing the differences in performance metrics after statistical perturbation without training. The typical methods including log-probability curvature 
(DetectGPT~\citep{mitchell2023detectgpt}) and conditional
log-probability curvature (Fast-DetectGPT~\citep{bao2023fast}), normalized log-rank perturbation (DetectLLM~\citep{su2023detectllm}), N-gram divergence between multiple completions of a truncated passage (DNA-GPT~\citep{yang2023dna}), and the intrinsic dimensionality of generated text (PHD~\citep{tulchinskii2024intrinsic}).   Binoculars~\citep{hans2024spotting} evaluates the log perplexity of the given text by leveraging an ``observer" LLM, while a ``performer" LLM generates next-token predictions. The perplexity of these predictions is then calculated based on the observer's assessment. These methods improve the detectors' ability to adapt to new data and source models, and become leading methods in machine-generated text detection.

\textbf{Detection Evasion Methods} As LLM-generated text detectors receive increasing attention, recent research has been
conducted on methods for circumventing
these detectors. The popular techniques are paraphrasing methods, including
training an additional model to modify the AI-generated content ~\citep{sadasivan2023can,krishna2024paraphrasing},  in-context optimization~\citep{lu2023large}, space
infiltration~\citep{cai2023evade}, homoglyph-based rewritten~\citep{creo2024evading}, watermark bypassing~\citep{jovanovic2024watermark,wu2024bypassing,kirchenbauer2023reliability}. However, the premise of watermarking involves embedding detectable patterns into generated text prior to its release, typically hosted behind APIs that enforce watermarking. A single strong LLM with accessible weights undermine these threats~\citep{nicks2024language}. Besides, \citep{zhou2024humanizing} employs a surrogate model to mask and substitute words. \citep{shi2024red} generates substitution words through a protected LLM. These paraphrasing methods modify prompts or generated content in the text-level, requiring processing of prompts or output texts with each attack launch. An alternative research line focuses on adapting the source model, enabling it to generate texts that evade detection. For example, ~\citep{nicks2024language} considers a more robust threat model, where the pre-trained model is fine-tuned to maximize the `humanness’ probability of the detector with DPO~\citep{rafailov2024direct}. However, the effectiveness of DPO in evading detection is contingent on the assumption that the source model is relatively small (e.g., 7B). This makes it challenging to strike a balance between maintaining high-quality generated text and successfully evading detection. Prior works~\citep{ding2023parameter,xu2023parameter} indicate the parameter efficient fine-tuning becomes more expensive when dealing with larger models (e.g., 70B). Different from previous works, our work attacks the LLM by using a humanized SLM as a proxy attacker during the decoding phase, to generate text that mimic human writing style while preserving the utility of the generated content.

\vspace{-0.35cm}
\section{Methodology}
\vspace{-0.25cm}
\subsection{Task Definition.} 
\vspace{-0.1cm}
\textbf{Text generation.} We consider a general framework for text generation processes: given the set of prompts $\gX$ and responses $\gY$, an auto-regressive model generates an output sequence $y = [y_1, \cdots, y_T] \in \gY$ conditional on a prompt $x \in \gX$, based on conditional probability distributions $\pi(y_t | x, y_{<t})$, where each $y_t$ is a single token. For the rest of this paper, we denote \textit{machine} and \textit{human} generative processes by $M$ and $H$ respectively, and use $\pi_M$, $\pi_H$ to denote their corresponding conditional probabilities. 
This random process also yields an overall distribution of output text $y$ given prompt $x$: $\sP (y | x) = \prod_{t = 1} ^T \pi(y_t | x, y_{<t})$, which for simplicity we also denote by $\pi(y | x)$. 

\textbf{Machine-Generated Text Detection.} 
Given a prompt-response pair $(x, y)$, a detector $D$ is essentially a binary classifier, whose task is to detect whether the response is generated from a known language model $M$ or a human process $H$. To align with existing framework, we assume an implicit reward function $r(x, y)$ that the detector bases its decisions on, which gives higher reward for human-like texts compared to machine-generated texts. We discuss this reward assumption in detail in Section~\ref{subsec: evade detection}. 



\textbf{Detection Evasion.} 
In the task of detection evasion, we aim to find a new machine generative process $M'$, such that the detector is unable to distinguish texts generated by $M'$ from those by $H$. In our theoretical analyses, this is formulated as achieving an expected reward $\E_{y \sim \pi_{M'}(\cdot | x)} r(x, y)$ on par with the human expected reward, given that the initial expected reward for $M^{\reff}$ is much smaller in comparison; in our experiments, we demonstrate the effectiveness of our model by computing the \textbf{\underline{a}}rea \textbf{\underline{u}}nder the \textbf{\underline{r}}eceiver \textbf{\underline{o}}perating \textbf{\underline{c}}haracteristic curve (AUROC) and showing a decrease for $M'$ against $M^{\reff}$. 

\subsection{Fine-tuning the language model with RL}
\textbf{Preference-based RL for language models.} 
Preference-based reinforcement learning (PBRL) leverages human or evaluative feedback to optimize a model's behavior using RL. Recent studies apply PBRL to LLMs with the hope of aligning the model to match human preferences. To fine-tune a pre-trained language model $M^{\reff}$, a preference dataset $\gD:=\{(x,y^w,y^l)\}$ is required, where the responses $y^w, y^l \sim \pi^{\reff}(\cdot|x)$ are sampled from a reference policy $\pi^{\reff}$ that could be obtained after supervised fine-tuning (SFT), while preferences $y^w \succ y^l|x$ are labeled either by AI system or human annotator, indicating $y^w$ is preferred over $y^l$ given the query $x$. 
In PBRL, the preference is assumed to be associated with a latent reward function $r^*$. To learn this reward from the dataset, the Bradley-Terry model~\citep{bradley1952rank} is commonly used, which assumes the probability of $y^w \succ y^l | x$ satisfies the following: 
\begin{equation}
\label{eq: bradley_terry}
    p (y^w \succ y^l | x) := \frac{\exp \big( r^* (x, y^w) \big)} {\exp \big( r^* (x, y^w) \big) + \exp \big( r^* (x, y^l) \big)}. 
\end{equation}
It follows that the maximum-likelihood reward learning objective is
\begin{equation*}
    r^* \gets \argmax_{r\in\gR} \mathbb{E}_{(x,y^w,y^l)\sim\gD} \big[\log \sigma(r(x,y^w)-r(x,y^l))\big],
\end{equation*}
where $\sigma$ is the sigmoid function. After obtaining the reward $r^*$, the RL fine-tuning of a language model follows the objective
\begin{equation}
\label{eq: rl_ft}
    \pi^* \gets \argmax_{\pi} \E_ {x\sim \gD, y\sim \pi(\cdot|x)}\big[r^*(x,y)-\beta \sD_{\text{KL}}[\pi(y|x)\| \pi^{\reff}(y|x)]\big].
\end{equation}

Finally, Direct Preference Optimization (DPO) in~\citep{rafailov2024direct}
provides a solution for $\pi^*$ in \eqref{eq: rl_ft} without learning the reward function, by optimizing the objective
\begin{equation}
\label{eq: loss_dpo}
\pi^* \gets \argmax_{\pi}\E_{(x, y^w, y^l) \sim \gD} \bigg[ \log \sigma \bigg( \beta \log \frac{\pi (y^w | x)} {\pi ^{\reff} (y^w | x)} - \beta \log \frac{\pi (y^l | x)} {\pi ^{\reff} (y^l | x)} \bigg) \bigg]. 
\end{equation}

\subsection{Evade Detection via Humanized Proxy Attack} 
\label{subsec: evade detection}

When using PBRL to fine-tune large language models (LLMs) for detector evasion, the main challenge is the significant computational cost due to the large size of the models (e.g., 70B parameters). Directly fine-tuning such large models for attacks is impractical, so we propose HUMPA, which leverages DPO to fine-tune a smaller language model (SLM). 
The main idea is to fine-tune an SLM towards optimal reward until it reaches the same level of reward for the human process according to a scoring detector, and adapt the LLM to achieve the same expected reward. 

\textbf{Obtaining a humanized SLM.} DPO fine-tuning technique~\citep{rafailov2024direct,nicks2024language} can be applied for bypassing detectors. For each prompt $x \in \gX$ in the dataset, sample response pairs $(y_1, y_2)$ are generated by the reference model $\pi^{\reff}$. To obtain the dataset $\gD = \{ (x, y^w, y^l) \}$, preference labels are assigned by comparing a scoring detector's human-ness score $s(x, y)$ on the responses: if $s(x,y_1)>s(x,y_2)$, assign preference label $y_1 \succ y_2$ and let $y^w = y_1$, $y^l = y_2$; otherwise assign $y^w = y_2$, $y^l = y_1$. The generated dataset $\gD$ is then used to fine-tune a pre-trained SLM $M_{\smm}^{\reff}$ with DPO in \eqref{eq: loss_dpo}. As a result, we have a humanized SLM denoted as $M_{\smm}$ as the proxy attacker. 

We notice that this label assignment process can be approximated by the Bradley-Terry model in \eqref{eq: bradley_terry} when $r(x, y) = C \cdot s(x, y)$ with a large constant $C$ (see Appendix~\ref{apd subsec: score vs reward}), therefore we will assume the detector follows an implicit reward function $r$ to generate $\gD$ from now on. 

Generally, for fine-tuning a language model using DPO with preference data from detectors, we have the following lemma characterizing the ability of the fine-tuned model to evade detection (details and proof refer to Appendix~\ref{apd subsec: thm 1 proof}): 
\begin{lemma} \label{thm: dpo effect}
    Given a starting reference model $M^{\reff}$ with a low reward, there exists hyperparameter $\beta$ such that the optimal model $M^*$ fine-tuned on the DPO objective in \eqref{eq: loss_dpo} achieves the same expected reward as $H$: 
    $\E_{x \sim \gD, y \sim \pi_{M^*} (\cdot | x)} r(x, y) = \E_{x \sim \gD, y \sim \pi_{H} (\cdot | x)} r(x, y)$. 
\end{lemma}
Intuitively, this result is due to the effect of $\beta$ on the fine-tuned model: the smaller $\beta$ is, the closer $M^*$ approaches optimal reward, while larger $\beta$ results in higher similarity to the reference model and hence higher quality. This is in line with the RL objective in \eqref{eq: rl_ft}, in which the $\beta$ term controls the strength of regularization. 

\textbf{Attacking the LLM using humanized SLM.} 
With a humanized SLM trained on the DPO objective, our proxy attack \modelname operates on the LLM's next-token output distribution by multiplying a logit offset for each token probability. This offset is calculated as the ratio between the logits of the proxy-attacker small model $M_{\smm}$ and those of the pre-trained reference small model $M^{\reff}_{\smm}$. Formally, at each time step $t$, given the tokens $y_{<t}$, the probability distribution of our proxy-attacker large model $M'$ is calculated as
\begin{equation} \label{eq:inference model}
   \pi_{M'} (y_t|x, y_{<t}) = \frac{1}{Z_{x,y<t}} \pi^{\reff}_{M} (y_t|x,y_{<t}) 
   \bigg( \frac{\pi_{M_{\smm}}(y_t|x,y_{<t})}{\pi^{\reff}_{M_{\smm}}(y_t|x,y_{<t})}\bigg)^{\alpha}, 
\end{equation}
where $Z_{x,y<t}=\sum_{y_t} \pi^{\reff}_{M} (y_t|x,y_{<t}) \big( \frac{\pi_{M_{\smm}}(y_t|x,y_{<t})} {\pi^{\reff}_{M_{\smm}} (y_t|x,y_{<t})} \big) ^{\alpha}$ is the normalization factor and $\alpha$ is the attack ratio. The term $\big( \frac{\pi_{M_{\smm}}(y_t|x,y_{<t})}{\pi^{\reff}_{M_{\smm}}(y_t|x,y_{<t})}\big)^{\alpha}$ captures the distribution shift from pre-trained to fine-tuned on the small model, and attempts to approximate the corresponding shift on the large model $\frac{\pi_{M'}(y_t|x, y_{<t})}{\pi^{\reff}_{M} (y_t|x,y_{<t})}$. Prior works~\citep{mitchell2023emulator,liu2024tuning,liu2024decoding} suggest taking the logarithm of logits, which allows us to derive the probability distribution from the proxy-attacked model $M'$ as
\begin{equation}
\label{eq:log}
    p_{M'}(y_t|x, y_{<t})=\softmax \big[ p^{\reff}_{M} (y_t|x,y_{<t})+\alpha\left(p_{M_{\smm}}(y_t|x,y_{<t})-p^{\reff}_{M_{\smm}}(y_t|x,y_{<t}) \right) \big],
\end{equation}
where $p^{\reff}_{M}$, $p_{M_{\smm}}$ and $p^{\reff}_{M_{\smm}}$ are the logarithmic logits for the pre-trained large model $M^{\reff}$, the fine-tuned small model $M_{\smm}$ and the pre-trained small model $M_{\smm} ^{\reff}$ respectively. 
Our method tuning at a small scale and applying the attack to the large model through Equation~\ref{eq:log}.
It is important to note that $M$ and $M_{\smm}$ only need to share the same vocabulary\footnote{If vocabularies does not match, methods like those in~\citep{gao2024rebel} can be used to address this issue.}. Compared to generically fine-tuning the large model using DPO, we have the following theorem (for the proof refer to Appendix~\ref{apd subsec: thm 2 proof}). 
\begin{theorem} \label{thm: model equivalence}
    Assuming the small fine-tuned model $M_{\smm}$ achieves optimum according to the DPO objective with $\beta = \beta_0$, our proposed inference model $M'$ in \eqref{eq:inference model} is the same as an alternative large model fine-tuned on the DPO objective with $\beta = \beta_0 / \alpha$. 
\end{theorem}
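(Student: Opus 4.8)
The plan is to exploit the closed-form characterization of the DPO/RL optimum and then match exponents. Recall that the KL-regularized objective \eqref{eq: rl_ft} with coefficient $\beta$ is maximized by the Gibbs-form policy (the same closed form that DPO is built on),
\[
\pi^*(y|x) = \frac{1}{Z(x)}\,\pi^{\reff}(y|x)\exp\!\Big(\tfrac{1}{\beta}\,r(x,y)\Big),
\qquad Z(x)=\sum_{y}\pi^{\reff}(y|x)\exp\!\Big(\tfrac{1}{\beta}\,r(x,y)\Big).
\]
First I would apply this to the small model. Since $M_{\smm}$ is assumed to attain the DPO optimum at $\beta=\beta_0$ against the reference $M^{\reff}_{\smm}$, its sequence-level policy obeys
\[
\frac{\pi_{M_{\smm}}(y|x)}{\pi^{\reff}_{M_{\smm}}(y|x)}
= \frac{1}{Z_{\smm}(x)}\,\exp\!\Big(\tfrac{1}{\beta_0}\,r(x,y)\Big).
\]

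Next I would assemble the sequence-level distribution induced by the inference model \eqref{eq:inference model}. Taking the product of the per-token factors over $t=1,\dots,T$ and applying the chain rule $\pi(y|x)=\prod_t \pi(y_t|x,y_{<t})$ to each of $\pi^{\reff}_{M}$, $\pi_{M_{\smm}}$, and $\pi^{\reff}_{M_{\smm}}$ collapses the conditionals into full-sequence probabilities, giving
\[
\pi_{M'}(y|x)\ \propto\ \pi^{\reff}_{M}(y|x)\left(\frac{\pi_{M_{\smm}}(y|x)}{\pi^{\reff}_{M_{\smm}}(y|x)}\right)^{\!\alpha}.
\]
Substituting the small-model display and absorbing the $y$-independent factor $Z_{\smm}(x)^{-\alpha}$ into the overall normalizer yields $\pi_{M'}(y|x)\propto \pi^{\reff}_{M}(y|x)\exp\!\big(\tfrac{\alpha}{\beta_0}\,r(x,y)\big)$. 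Comparing with the Gibbs form applied to the large reference $M^{\reff}$, this is precisely the closed-form optimum of the DPO objective \eqref{eq: loss_dpo} for a large model fine-tuned against $M^{\reff}$ with effective coefficient $\beta'$ satisfying $1/\beta'=\alpha/\beta_0$, i.e. $\beta'=\beta_0/\alpha$, which is the claimed equivalence.

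The main obstacle I anticipate is reconciling the per-token normalization in \eqref{eq:inference model} with the sequence-level normalization in the Gibbs/DPO closed form: the product $\prod_t Z_{x,y_{<t}}$ of the per-step partition functions is path-dependent and does not a priori collapse into the single sequence normalizer $\sum_{y}\pi^{\reff}_{M}(y|x)\exp(\tfrac{\alpha}{\beta_0}r(x,y))$. I would handle this by phrasing the equivalence at the sequence level, so that the telescoping product gives the intended proportionality up to an $x$-only constant, or equivalently by arguing in the log-logit parameterization \eqref{eq:log}: there the additive offset $\alpha\big(p_{M_{\smm}}-p^{\reff}_{M_{\smm}}\big)$ on the large model's logits corresponds directly to adding $\tfrac{\alpha}{\beta_0}\,r(x,y)$ to the reference log-density prior to the $\softmax$, making the matched large-model fine-tune transparent. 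A secondary item to check is that $\beta_0/\alpha$ lies in an admissible range (for instance the $\beta$ regime of Theorem~\ref{thm: dpo effect}), so that the alternative large-model DPO fine-tune is itself well-defined.
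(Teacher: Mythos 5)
Your proposal follows essentially the same route as the paper's proof in Appendix~\ref{apd subsec: thm 2 proof}: substitute the Gibbs-form DPO optimum into the small-model ratio, telescope the per-token factors of \eqref{eq:inference model} into a sequence-level product, absorb the $y$-independent constants into a single normalizer, and match exponents to read off $\beta' = \beta_0/\alpha$. The normalization obstacle you flag is exactly the step the paper resolves by asserting $\prod_{t} Z_{x,y_{<t}} = \sum_{y} \pi_{M^{\reff}_{\bgm}}(y|x)\big(\pi_{M_{\smm}^{*}}(y|x)/\pi_{M_{\smm}^{\reff}}(y|x)\big)^{\alpha}$, so your plan to phrase the equivalence at the sequence level coincides with (and is no less rigorous than) the paper's own treatment.
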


Theorem~\ref{thm: model equivalence} reveals that the attack ratio $\alpha$ has a similar (but inverted) effect as $\beta$ on the resulting model $M'$: larger $\alpha$ leads to higher reward and better detection evasion, while smaller $\alpha$ keeps $M'$ closer to the reference model $M^{\reff}$. Therefore, $\alpha$ effectively controls the trade-off between evasion performance and quality \textit{at the decoding phase}, in contrast to $\beta_0$ which is applied \textit{at fine-tuning}. 

Finally, combining Theorem~\ref{thm: model equivalence} with Lemma~\ref{thm: dpo effect}, we have the following claim for HUMPA. 
\begin{corollary}
Given parameter $\beta_0$ for fine-tuning the SLM $M_{\smm}$ on the DPO objective \eqref{eq: loss_dpo}, there exists attack ratio $\alpha > 0$ such that the resulting proxy attacker $M'$ achieves the same expected reward as the human process $H$ according to detector $D$, thereby evading detection. 
\end{corollary}

\section{Experiments}
\subsection{Experimental Setups}
\textbf{Datasets.}
We conduct a wide variety of empirical studies to show the effectiveness of our method. We follow~\citep{bao2023fast} to evaluate both the white-box and black-box detectors. We evaluate the detection evasion capability of \modelname attacked by humanized SLMs on the same dataset, including \emph{OpenWebText}~\citep{Gokaslan2019OpenWeb}, \emph{WritingPrompts}~\citep{fan2018hierarchical} and \emph{PubMedQA}~\citep{jin2019pubmedqa}. We randomly sample 500 examples of each dataset as human-written texts.
For cross-domain evaluation, detection evasion is performed using humanized SLMs on a different dataset. The experiments are conducted on the cross-discipline corpus GPABench2~\citep{liu2023check},  a dataset containing titles and abstracts of scientific writing across Computer Science (CS), Physics (PHX), and Humanities and Social Sciences (HSS). Evasion is evaluated on PHX, where the source model is attacked by a humanized SLM from CS, and on HSS, where the source model is attacked by a humanized SLM from PHX. The cross-language evaluation is conducted on \emph{WMT-2016}~\citep{bojar2016findings}, where 150 examples are sampled from the Germany set as human-written texts. The humanized SLM is fine-tuned using samples from the English domain.

\textbf{Evaluation Metrics.} Following previous works, we compute the area under the receiver operating characteristic curve (AUROC) to evaluate the performance of all detectors. We also provide the results of area under the precision-recall curve (AUPRC) in Appendix. To evaluate the quality of the generated texts (i.e., utility), we adopt the popularly used ROUGE-1, ROUGE-2, ROUGE-L~\citep{chin2004rouge} and BERTScore~\citep{zhang2019bertscore} to evaluate the texts produced by the LLMs during the text generation phase. Detailed explanation of the metrics are in Appendix~\ref{apd:more_metrics}

\textbf{Source Models.} To validate our approach for evading detection, we include the most advanced open-source LLMs: Llama2, Llama3, and Mixtral. Specifically, for the base large model, we use Llama2-13B (\texttt{Llama2-13B-Chat}), Llama3-70B (\verb+Llama3-70B-Instruct+), and Mixtral-8$\times$7B (\texttt{Mixtral-8}$\times$\texttt{7B-Instruct-v0.1}). We use these source models as the base large models, attacked by small humanized model Llama2-7B (\texttt{Llama2-7B-Chat}), Llama3-8B (\texttt{Llama3-8B-Instruct}) and Mistral-7B (\texttt{Mistral-7B-Instruct-v0.1}), respectively. 
\begin{table}[t]
\vspace{-0.3cm}
  \small
  \caption{AUROCs of detectors and generation utility scores on text generated by different models, averaging across OpenWebText, WritingPrompts and PubMedQA from detailed Table~\ref{tab:benchmark_white_auc} and \ref{tab:benchmark_black_auc} in Appendix~\ref{apd:more_benchmark}. The model name in parentheses after \modelname refers to the SLM.
  The generation utilities of texts produced by \modelname and the source model are within the budget of $\Delta S_{Bert}\leq 0.02$ and $\Delta \text{ROUGE-1} \leq 0.03$. The highest attack results (the greatest relative decrease) are \best{boldfaced}, while the lowest attack results are \worst{underlined} for each model.}
  \label{tab:benchmark_average_auc}
  \begin{center}
  
  \centerline{
  \setlength\tabcolsep{0.5pt}
  \begin{tabular}{c|cc|cc|cc}
    \toprule
    \multirow{2}{*}{Models$\rightarrow$} &\multirow{2}{*}{Llama2-13B} & 
    \modelname
    & \multirow{2}{*}{Llama3-70B} & \modelname

    & \multirow{2}{*}{Mixtral-8x7B}
 & \modelname \\

  & & 
    (Llama2-7B)
    &  & (Llama3-8B)

    & 
 & (Mistral-7B) \\
 \hline
 \rowcolor{gray!20} 
 \multicolumn{7}{c}{The Generation Utility}\\
 $S_{\text{Bert}}$ & 0.8278 & 0.8206 & 0.8276 & 0.8126 & 0.8342 & 0.8281\\
 ROUGE-1 & 0.2780 & 0.2517 & 0.2926 & 0.2685 & 0.2911 & 0.2663\\
 ROUGE-2 & 0.0972 & 0.0798 & 0.1019 & 0.0929 & 0.1045 & 0.0818\\
 ROUGE-L & 0.1995 & 0.1779 & 0.2078 & 0.1800 & 0.2140 & 0.1863\\ 
 \rowcolor{gray!20} 
 \multicolumn{7}{c}{The White-box Setting}\\
    Likelihood & 0.9995 & 0.8610 & 0.9995 & 0.9070 & 0.8932 & 0.4730  \\
    LogRank & 0.9993 & 0.8424 & 0.9991 & 0.8769 & 0.9072 & 0.4598\\
    LRR & 0.8547 & 0.6634 & 0.8512 & \best{0.5084} & 0.9039 & \best{0.3656} \\
    NPR & 0.9908 & 0.9148 & 0.9836 & 0.8387 & 0.8747 & 0.6629\\
    DNA-GPT  & 0.9815 & \best{0.6656} & 0.9908 & 0.8410 & 0.8217 & 0.3388
    \\
    DetectGPT & 0.8915 & \worst{0.8430} & 0.8313 & \worst{0.7906} & 0.6353 & \worst{0.5348}
    \\
    Fast-DetectGPT & 0.9949 & 0.9262 & 0.9952 & 0.8864 & 0.9933 & 0.4590
    \\ 
    \rowcolor{gray!20} 
 \multicolumn{7}{c}{The Black-box Setting}\\
    Roberta-base & 0.9044 & 0.7902 & 0.8225 & 0.5600 & 0.6992 & 0.5506 \\
    Roberta-large & 0.8874 & \worst{0.7862} & 0.8130 & 0.5556 & 0.7016 & \worst{0.6233} \\
    Likelihood(Neo-2.7) & 0.9961 & 0.6971 & 0.9699 & 0.7338 & 0.8156 & 0.3542 \\
    LogRank(Neo-2.7) & 0.9850 & 0.6976 & 0.9733 & 0.7203 & 0.8254 & 0.3519 \\
    LRR(Neo-2.7) & 0.9843 & 0.7773 & 0.9537 & \best{0.5361} & 0.8210 & 0.3309 \\
    NPR(Neo-2.7) & 0.8586 & 0.6881 & 0.8574 & 0.5893 & 0.7076 & 0.3639 \\
    DNA-GPT(Neo-2.7) & 0.7116 & \best{0.4295} & 0.7567 & 0.5296 & 0.5983 & 0.1907 \\
    DetectGPT(T5-3B/Neo-2.7) & 0.8282 & 0.6209 & 0.7696 & 0.5853 & 0.6760 & 0.3811 \\
    Fast-DetectGPT(GPT-J/Neo-2.7) & 0.9961 & 0.7389 & 0.9885 & \worst{0.7720} & 0.8758 & \best{0.2590} \\
    Binoculars(Falcon-7B)& 1.0000 & 0.7896 & 0.9989 &  0.8596&0.9353&0.3053\\
    \bottomrule
  \end{tabular}
  }
  
  \end{center}
  \vskip -10pt
\end{table}

\textbf{The Detectors.} We conduct experiments with a variety of strong open-source detectors from prior literature, including RoBERTa-base and RoBERTa-large~\citep{solaiman2019release}, the language models trained for detection, the zero-shot detectors Likelihood and Log Rank, the perturbation-based zero-shot method DetectGPT~\citep{mitchell2023detectgpt}, DetectLLM~\citep{su2023detectllm} and DNA-GPT~\citep{yang2023dna}. We also involve Fast-DetectGPT~\citep{bao2023fast} that uses a surrogate model respectively to compute the conditional probability curvature for the texts obtained from the sampling model, and Binoculars~\cite{hans2024spotting} that computes the ratio of perplexity to cross-perplexity obtained from ``observer" and ``performer" models.


\textbf{The Settings.} We evaluate the zero-shot methods in two settings, the \emph{white-box} (source model is known) setting and \emph{black-box} (source model is unknown) setting~\citep{yang2023survey,bao2023fast}. Following~\citep{bao2023fast}, we set the surrogate model in each detector to be identical to the source model, whereas in the black-box setting, the surrogate model differs from the source model. 
We utilize \verb+GPT-Neo-2.7B+~\citep{black2021gpt} as the surrogate model for all detectors~\citep{bao2023fast}. Apart from it, Fast-DetectGPT adpots a \verb+GPT-J-6B+~\citep{gpt-j} as the sampling model in the black-box setting. Binoculars adopts a \verb+Falcon-7B+ as the observer and \verb+Falcon-7B-Instruct+~\citep{almazrouei2023falcon} as the performer model. In both settings, DetectGPT applies \verb+T5-3B+~\citep{raffel2020exploring} as the sampling model to generate the perturbed texts.
\begin{table}[t]
  \small
  \caption{AUROCs of detectors and generation utility scores on text generated by different models on HSS. The humanized SLM is fine-tuned from PHX.
  The generation utilities of texts produced by \modelname and the source model are within the budget of $\Delta S_{Bert}\leq 0.02$ and $\Delta \text{ROUGE-1} \leq 0.03$. }
  \label{tab:phx_hss_auc}
  \begin{center}
  
  \centerline{
  \setlength\tabcolsep{0.5pt}
  \begin{tabular}{c|cc|cc|cc}
    \toprule
    \multirow{2}{*}{Models$\rightarrow$} &\multirow{2}{*}{Llama2-13B} & 
    \modelname
    & \multirow{2}{*}{Llama3-70B} & \modelname

    & \multirow{2}{*}{Mixtral-8x7B}
 & \modelname \\

  & & 
    (Llama2-7B)
    &  & (Llama3-8B)

    & 
 & (Mistral-7B) \\
 \hline
 \rowcolor{gray!20} 
 \multicolumn{7}{c}{The Generation Utility}\\
 $S_{\text{Bert}}$ &0.8291 & 0.8201 &0.8007  & 0.7976 & 0.8105 &0.7997\\
 ROUGE-1 &0.2632 & 0.2385 & 0.2506 & 0.2277 & 0.2381 &0.2104\\
 ROUGE-2 &0.0535 &0.0459  & 0.0513 & 0.0448 & 0.0447 &0.0364\\
 ROUGE-L & 0.1470& 0.1315 & 0.1383 & 0.1299 &0.1334  &0.1178\\ 
 \rowcolor{gray!20} 
 \multicolumn{7}{c}{The White-box Setting}\\
    Likelihood  & 1.0000& 0.8706 &0.9999 & 0.8317& 0.8145&0.4925 \\
    LogRank & 0.9997& 0.8575 &0.9976 & 0.7842& 0.8244&0.4670\\
    LRR & 0.8096& 0.6536 &0.4530 &\best{0.3323} &0.7862 &0.3387\\
    NPR &0.9993 & \worst{0.9414} &0.9954 &\worst{0.9327} &0.9226
 &0.8313\\
    DNA-GPT  &0.9622 & \best{0.7278} &0.9985 &0.7534 &0.6946
 &\best{0.2822}
    \\
    DetectGPT & 0.9338&  0.8306& 0.9170&0.8081 & 0.5982&\worst{0.5796}
    \\
    Fast-DetectGPT &0.9965 & 0.9365 & 0.9961& 0.9739&0.9700 &0.5333
    \\ 
    \rowcolor{gray!20} 
 \multicolumn{7}{c}{The Black-box Setting}\\
    Roberta-base & 0.8550&  \worst{0.7608}&0.7399 &0.6119 &0.6895 &0.5583\\
    Roberta-large & 0.8304& 0.7291 &0.7298 & 0.5819& 0.6733&\worst{0.5934}\\
    Likelihood(Neo-2.7) &0.9590 & 0.7121 &0.9403 & 0.5987&0.6679 &0.3149\\
    LogRank(Neo-2.7)  &0.9617 & 0.7057 &0.9343 & 0.5553& 0.6587&0.2879 \\
    LRR(Neo-2.7) & 0.9384& 0.6624 &0.8717 &\best{0.3780} &0.5816 &0.2153\\
    NPR(Neo-2.7) &0.9422 & 0.7511 &0.9226 &0.7528 &0.8010 &0.6218 \\
    DNA-GPT(Neo-2.7) & 0.9113&  \best{0.5827}& 0.9672& 0.5760& 0.6320&
   0.2082\\
    DetectGPT(T5-3B/Neo-2.7)  & 0.8160 & 0.6510 & 0.8026&\worst{0.6788} & 0.6993&0.5371\\
    Fast-DetectGPT(GPT-J/Neo-2.7) &0.9891 & 0.7488 &0.9839 & 0.6296& 0.7827&0.5333\\
    Binoculars(Falcon-7B)& 0.9971 & 0.7869 & 0.9990 & 0.7094 & 0.8252 & \best{0.1852}\\
    \bottomrule
  \end{tabular}
  }
  
  \end{center}
  \vskip -10pt
\end{table}

\textbf{Implementing Details.} 
We divided the dataset into evaluation and training sets. For the OpenWebText, WritingPrompts, PubMedQA, CS, and PHX datasets, we randomly choose 10k training samples each following the setup in~\citep{nicks2024language}. For the WMT-2016 dataset, all English samples were used as the training set. Then, based on the samples, we prompt the LLMs to generate corresponding texts using an 8-token prefix from human-written text as the starting point for the machine-generated content for dataset OpenWebText, WritingPrompts and WMT-2016. For the PubMedQA dataset, we use the tokenized question as a prompt for generating answers with the LLMs. For the cross-discipline scientific abstract dataset, we use the tokenized title as the prompt.
We employ the temperature of 1.0 across all the experiments, which is the same setting in~\citep{nicks2024language}. 
During the DPO fine-tuning phase, we apply Low-Rank Adaptation (LoRA) to fine-tuning the SLM, setting the batch size to 8, the learning rate to $2e-4$, and the optimizer to AdamW. We choose the attack ratio $\alpha$ from the set $\{0.1,0.2,...,1.0\}$ to balance the generation utility and the detection evasion performance. We conduct the experiments on a server with 4 NVIDIA A100 GPUs, each one with 80GB RAM.
\subsection{Main Results}
\label{sec:main_res}
\textbf{White-box and black-box machine-generated detection evasion.} We study the basic question of the feasibility of whether LLMs can generate human-like texts that, within a utility budget, can deceive detectors when attacked by a small humanized language model.
Table~\ref{tab:benchmark_average_auc} lists the evaluation of our method
with mainstream LLM detection methods in white- and black-box settings averaging across three datasets for each model. Following prior work~\citep{nicks2024language}, the SLM is fine-tuned with DPO against a scoring detector. In our experiments, we use Fast-DetectGPT for scoring. From the results, we find that LLMs attacked by the humanized small model are effective against all detection methods, showing a greater AUROC relative decrease in the black-box setting compared to the white-box setting for each detector and for each model. Furthermore, we find that in the white-box setting, DetectGPT is the most robust method across the models compared to others, whereas LRR exhibits a greater decrease than the other methods when detecting texts generated by Llama3-70B, which has been attacked by the humanized small model Llama3-8B, as well as texts produced by Mixtral-8×7B, which have been attacked by Mistral-7B. Additionally, we find that DNA-GPT has more fragile than the other methods for \modelname (Llama2-7B), with relative decrease 32.0\% and 39.6\% in the white-box and the black-box setting respectively. In the black-box setting for Fast-DetectGPT, \modelname (Mistral-7B) exhibits the highest AUROC relative decrease compared to other methods and models, achieving a relative decrease of 70.4\%, within a budget of $\Delta S_{Bert} \leq 0.02$ and $\Delta \text{Rouge-1} \leq 0.03$ for the produced texts.

\subsection{Results in Cross-domain Scenarios}
\label{sec:exp.cross_domain}

In real-world scenarios, an SLM proxy attacker may be unaware of the specific dataset that will be used to evaluate the detectors. We simulate two common real-world scenarios. The first is a classroom test, where the task is to write a scientific essay based on a given title, and the SLM proxy attacker is fine-tuned on a different academic discipline. The second scenario involves cross-language evasion, simulating an international hacker attempting to bypass detectors using a humanized SLM fine-tuned in another language.

\textbf{Cross-discipline Detection Evasion.}
\label{sec:exp.cross-discip}
We assess \modelname on cross-disciplinary datasets, randomly selecting 200 human-written texts for each dataset. Specifically, we evaluate the detection methods on HSS, as shown in Table~\ref{tab:phx_hss_auc}, using the source models attacked by the humanized SLM which was DPO fine-tuned on the PHX dataset.
As demonstrated in Table~\ref{tab:phx_hss_auc}, \modelname successfully evaded all detection methods across all source models. Notably, DNA-GPT shows a 24.4\% relative decrease compared to Llama2-13B and a 59.4\% decrease compared to Mixtral-8$\times$7B in the white-box setting, and a 36.1\% decrease compared to Llama2-13B and a 77.6\% relative decrease compared to Mixtral-8$\times$7B in the black-box setting. We also perform an assessment on PHX using humanized SLMs fine-tuned on the CS dataset (see Table~\ref{tab:cs_phx_auc} in Appendix~\ref{apd:more_cross_dis}).

\begin{wrapfigure}{r}{0.6\textwidth}
\vskip -15pt
 \begin{minipage}[b]{1.0\linewidth}
    \centering
    \subfigure[ROC Curves]{
      \includegraphics[width=1.0\linewidth]{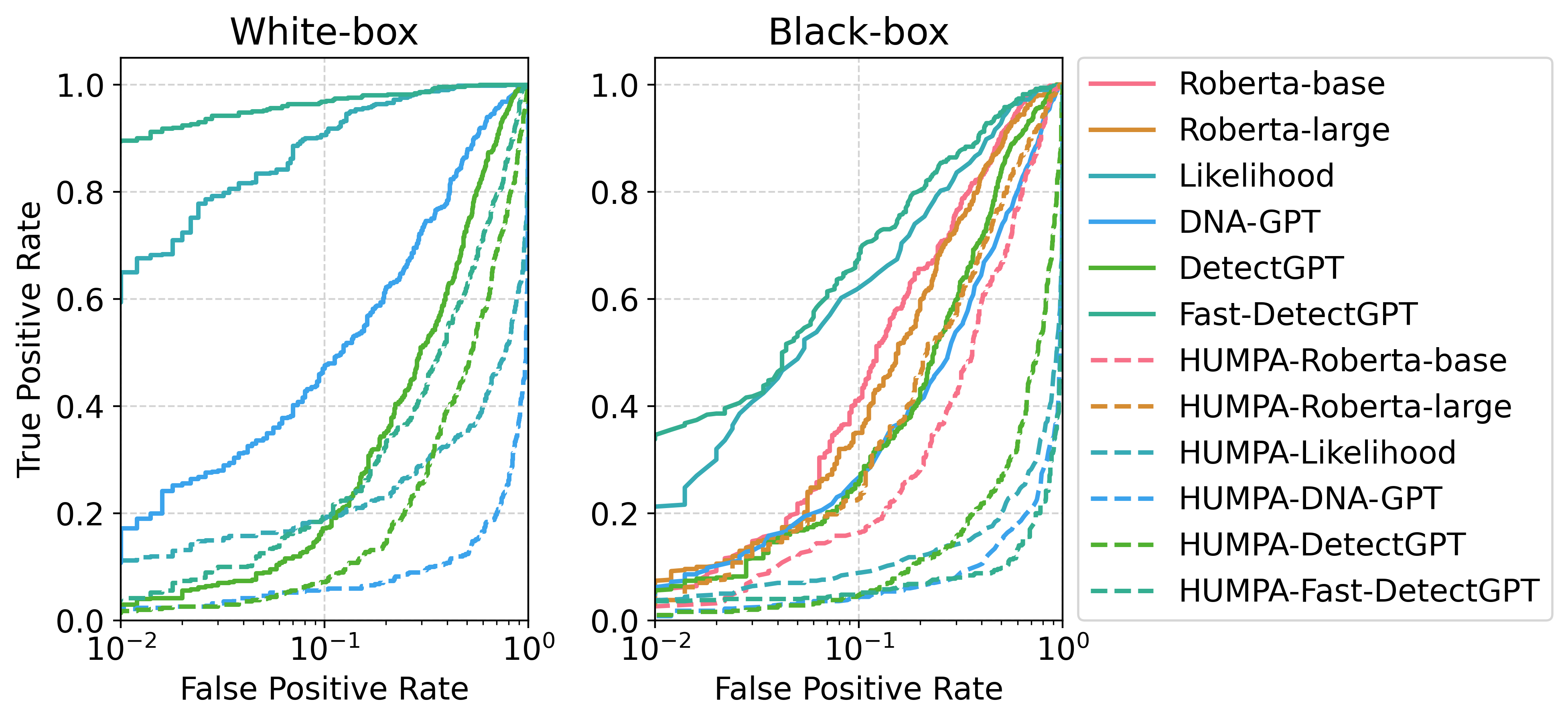}}
    \vskip -10pt
    \caption{ROC curves in log scale evaluated on WritingPrompts, where the source model is Mixtral-8$\times$7B. `HUMPA-' denotes this detector is evaluated on the texts produced by the attacked model.}
    \vskip -10pt
    \label{fig:tpr_fpr}
  \end{minipage}
  \end{wrapfigure}
  
\textbf{Cross-language Detection Evasion.}
\label{sec:exp.cross_language}
In this scenario, we assess the detectors on texts generated by LLMs in Germany, while the humanized SLM is fine-tuned using English. For evaluation, we sample 150 human-written texts. The results are presented in Appendix Table~\ref{tab:eng_germ_auc}. The findings include: 1) In the white-box setting, LRR is the most fragile detector when evaluating on the texts generated by the attacked Llama2-13B and Llama3-70B, with relative decrease 74.3\% and 72.6\% respectively. 2) In the black-box setting, when Llama2-13B is attacked by \modelname(Llama2-7B), the zero-shot detectors Likelihood and LogRank experience a relative performance drop of 91.3\% and 90.1\%, respectively. Fast-DetectGPT has relatively decrease 74.8\% and 79.2\% when evaluating on the texts generated by the attacked Llama3-70B and Mixtral-8$\times$7B. 
\subsection{Experimental Analysis}
\textbf{Interpretation of AUROC.} In real-world scenarios, our focus goes beyond overall detection \begin{wrapfigure}{r}{0.6\textwidth}
  \begin{minipage}[b]{1.0\linewidth}
    \centering
    \vspace{-0.4cm}
    \subfigure[$\Delta S_{Bert}$ vs. $\Delta \text{AUROC}$]{
      \label{fig:delta_bert}\includegraphics[width=.48\linewidth]{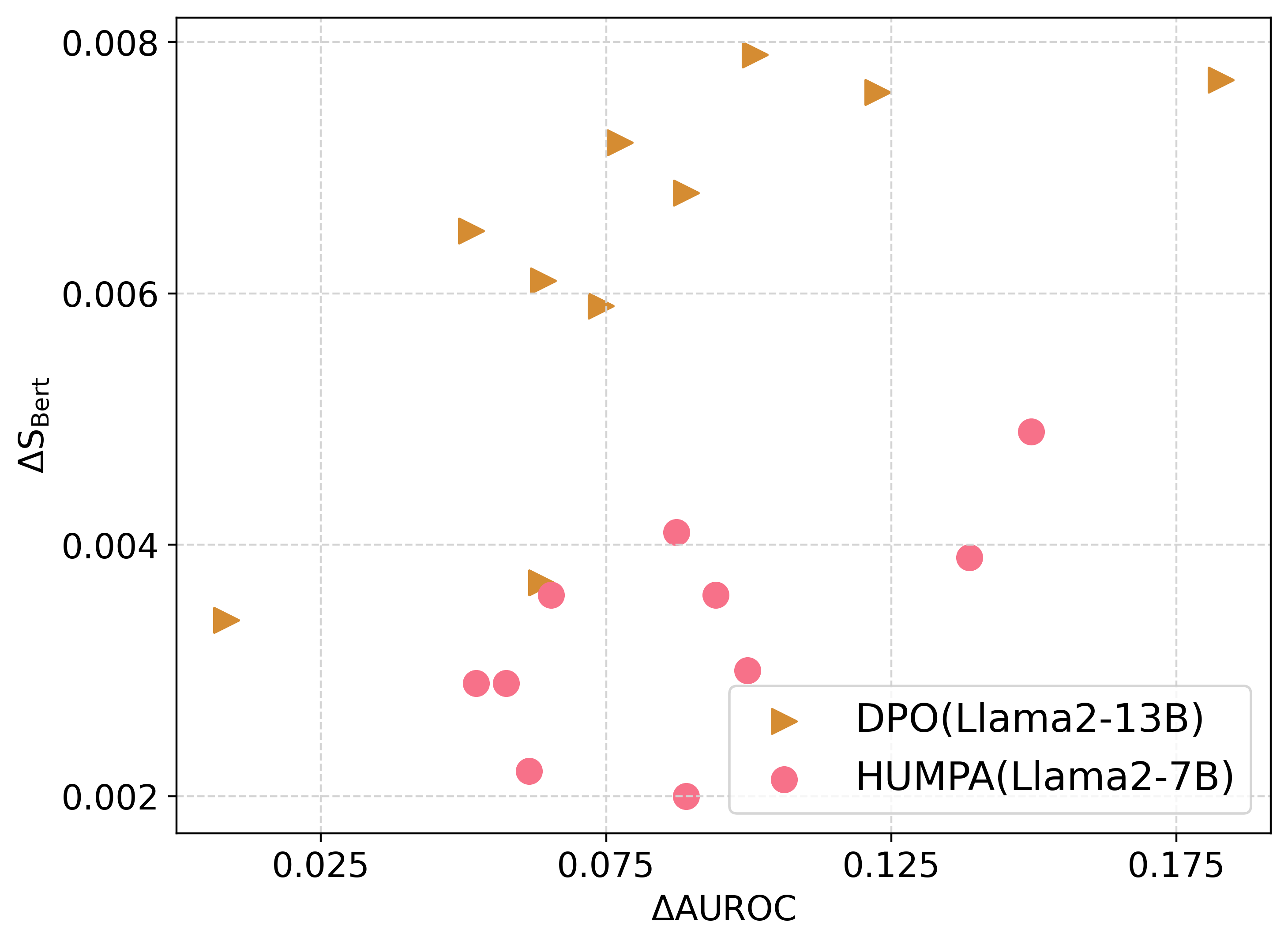}}\hfill
    \subfigure[$\Delta \text{ROUGE-1}$ vs. $\Delta \text{AUROC}$]{
      \label{fig:delta_rouge}\includegraphics[width=.48\linewidth]{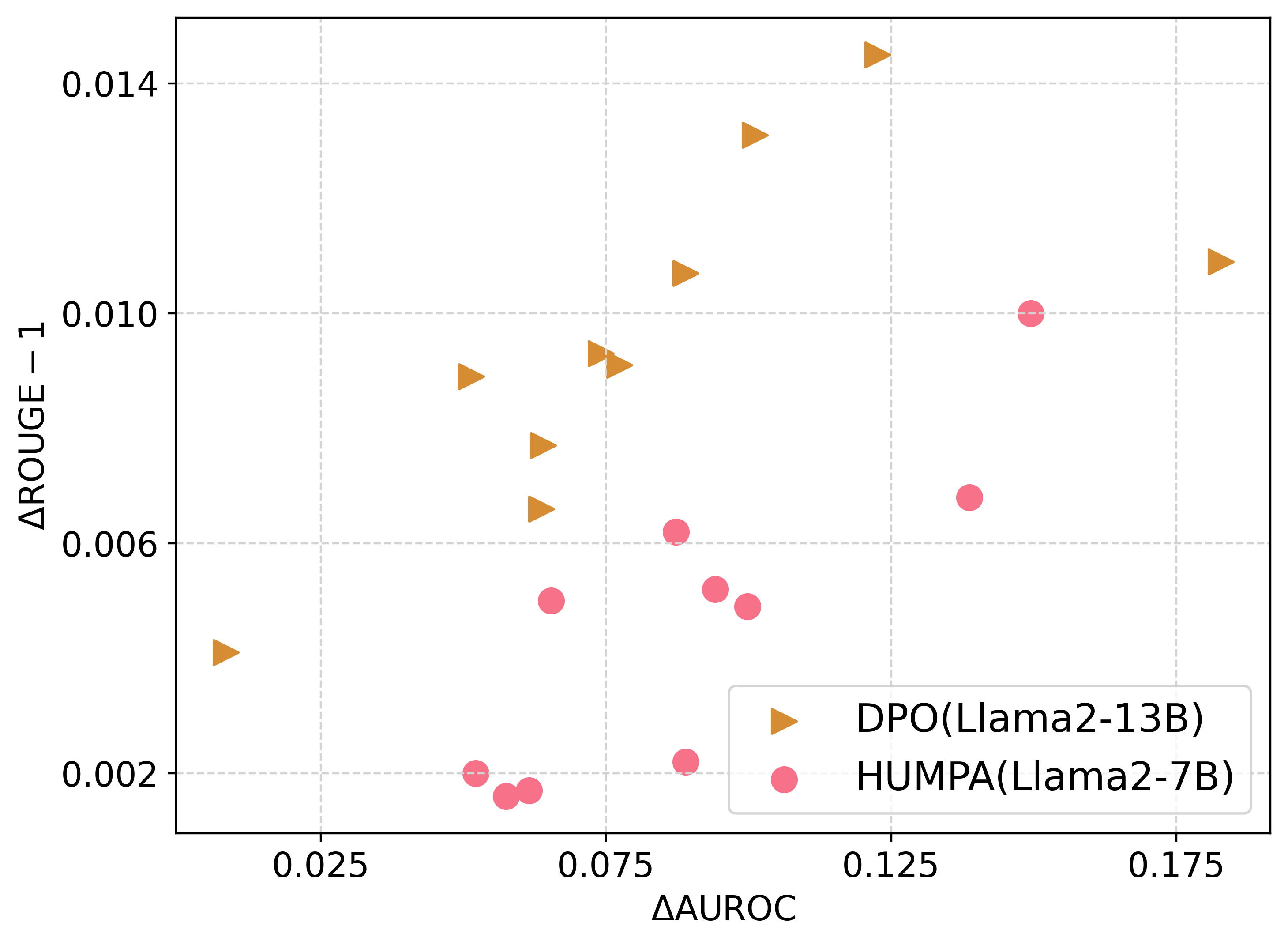}}
    \vskip -10pt
    \caption{Analysis of RoBERTa-base evaluated on texts generated by DPO (Llama2-13B), where DPO refers to the direct fine-tuning of Llama2-13B, and on texts generated by HUMPA (Llama2-7B).}
    \label{fig:pareto}
  \end{minipage}\hfill
  \vskip -10pt
\end{wrapfigure}accuracy; we prioritize recall (the true positive rate) while striving to 
minimize type-I errors, 
aiming for a low false positive rate. As shown in Figure~\ref{fig:tpr_fpr},
Fast-DetectGPT exhibits a relative decrease of 95.8\% in TPR at 1\% FPR, while Likelihood, LogRank, and DNA-GPT all experience a relative decrease of over 80\% in the white-box setting. In the black-box setting, Fast-DetectGPT and DNA-GPT show the largest and second-largest relative decreases, at 89.4\% and 87.5\%, respectively.

\textbf{Utility Preserving.} We evaluate the Roberta-base on the texts generated by directly fine-tune Llama2-13B against the scoring detector Roberta-large, following the approach in \citep{nicks2024language}. In comparison, Roberta-base is evaluated on the texts generated by Llama2-13B attacked using a humanized SLM Llama2-7B via our method \modelname. The texts generated by DPO (Llama2-13B) are obtained by fine-tuning Llama2-13B for 1 to 10 epochs, while\begin{wrapfigure}{r}{0.6\textwidth} 
\vskip -15pt
  \begin{minipage}[htbp]{1.0\linewidth} 
    \subfigure[AUROC v.s. $\alpha$]{
      \label{fig:alpha}\includegraphics[width=.48\linewidth]{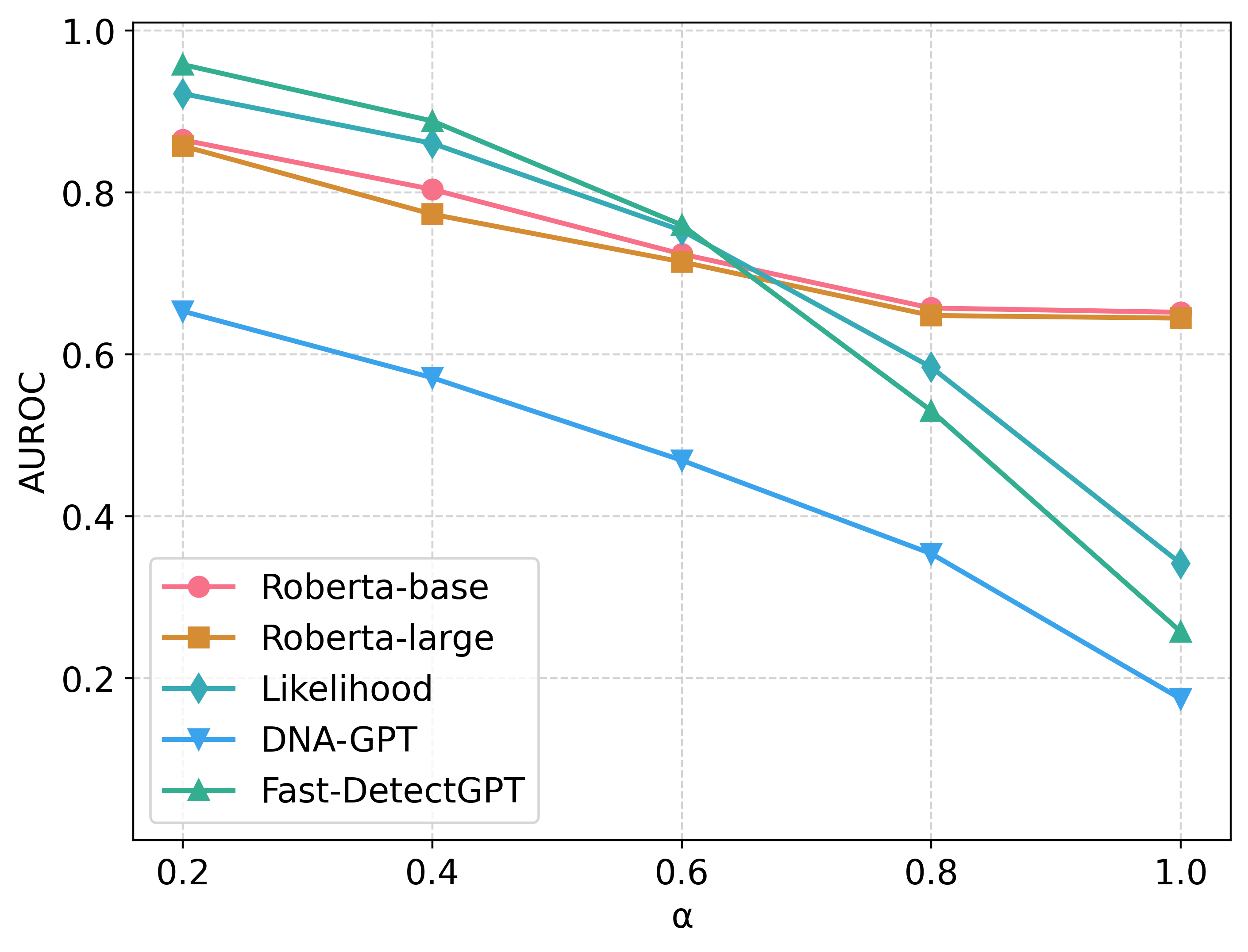}} 
    \subfigure[AUROC v.s. $\beta$]{
      \label{fig:beta}\includegraphics[width=.48\linewidth]{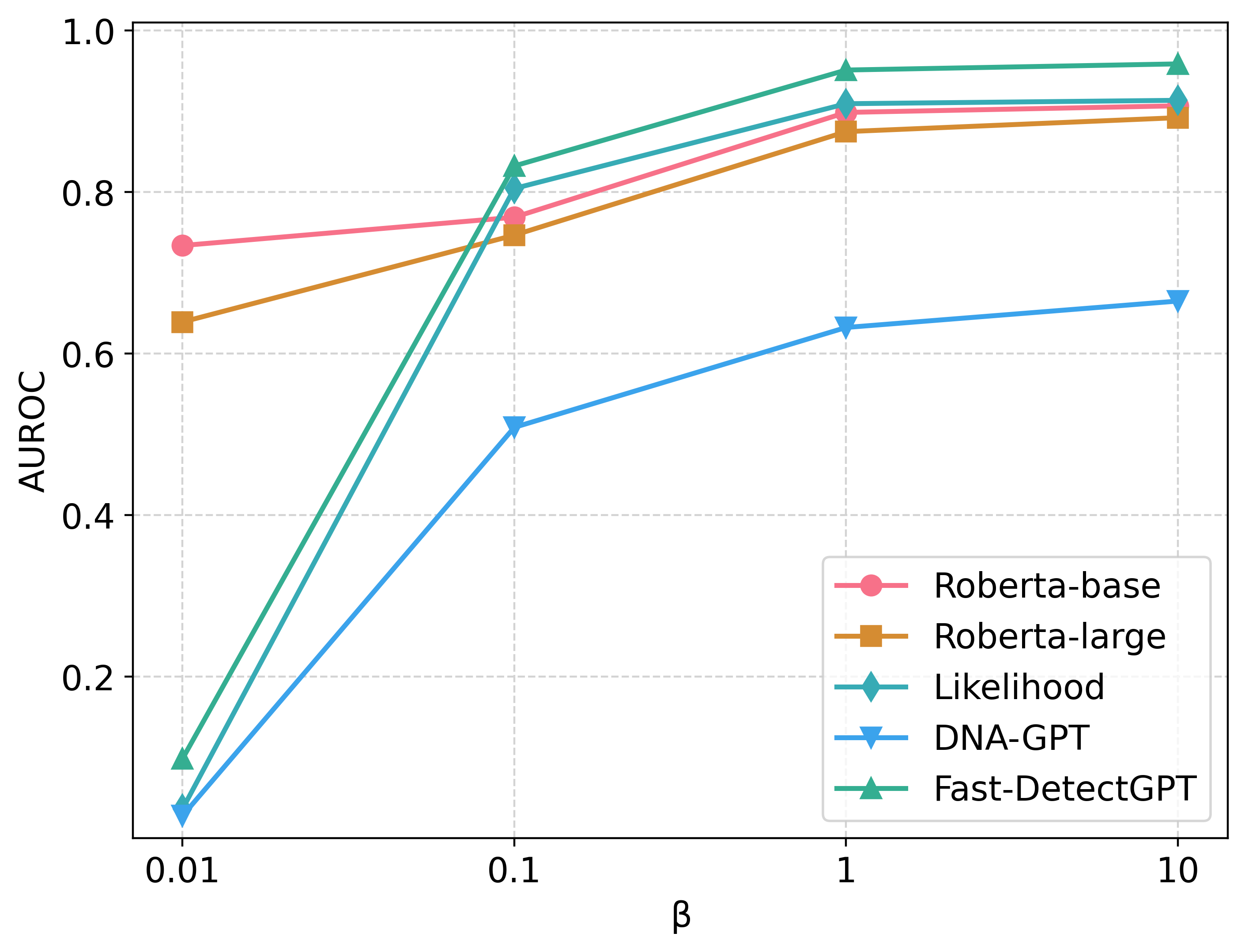}} 
    \vskip -8pt
    \caption{Analysis of attack ratio $\alpha$ and $\beta$ in DPO}
  \end{minipage}
  \vskip -20pt
\end{wrapfigure}  the results from \modelname (Llama2-7B) are produced by varying the attack ratio $\alpha$ from 0.1 to 1.0. We present the AUROC decrease ($\Delta$AUROC) with respect to the BERTScore decrease ($\Delta S_{\text{Bert}}$) for both methods in Figure~\ref{fig:delta_bert}, and $\Delta$AUROC with respect to the ROUGE-1 decrease ($\Delta \text{ROUGE-1}$) in Figure~\ref{fig:delta_rouge}.
The figures clearly show that both methods can deceive the detector but incur some utility loss. However, for a comparable AUROC decrease, our attack method \modelname results in less utility loss.

\textbf{Analysis of $\alpha$.} We evaluate several black-box detectors on the texts generated by the attacked Llama2-13B model, 
using the humanized SLM Llama2-7B fine-tuned with $\beta=0.1$ with varying attack ratio $\alpha$, on 100 samples randomly selected from WritingPrompts dataset. \begin{wraptable}{r}{7.6cm}
\vspace{-0.4cm}
  \small
  \caption{Time comparison between \modelname and directly fine-tuning the source LLM using DPO. `Sampling' refers to the stage of sampling response pairs, `Fine-tuning' represents DPO fine-tuning using LoRA, and `Inference' is the process of generation.}
  \label{tab:time}
  \begin{center}
  \centerline{
  \setlength\tabcolsep{0.5pt}
  
  \begin{tabular}{c|c|c|c}
\toprule
&Llama2-13B & Llama3-70B & Mixtral-8x7B \\\hline
Sampling (hrs) & 18.61& 41.64& 63.52\\
Fine-tuning (hrs) & 3.09& 9.54& 13.58\\
Inference (secs) & 12.88 & 27.87 & 32.51\\
\hline
& \modelname & \modelname & \modelname \\
& (Llama2-7B) & (Llama3-8B) & (Mistral-7B) \\\hline
Sampling (hrs) & 13.87& 10.97& 10.83\\
Fine-tuning (hrs) &2.04 & 3.20& 1.39\\
Inference (secs) & 39.87& 47.22& 36.75\\
\bottomrule
  \end{tabular}\vspace{-0.5cm}
  }
  
  \end{center}
  \vskip -10pt
  
\end{wraptable}The trend in Figure~\ref{fig:alpha} shows the AUROC decrease as $\alpha$ increases (more analysis refer to Appendix~\ref{appd:alpha}).   

\textbf{Analysis of $\beta$.} To analyze the parameter $\beta$ of the fine-tuned SLM, we measure its impact on the AUROC performance of various detectors with respect to different values of $\beta$ in DPO. The evaluation is conducted using several black-box detectors. We maintain the attack ratio of the humanized SLM at 0.5. The results, shown in Figure~\ref{fig:beta}, suggest that detectors exhibit increased vulnerability as $\beta$ decreases.

\textbf{Efficiency.}
As described in Section~\ref{subsec: evade detection}, fine-tuning the model with DPO requires a preference dataset, which is generated by sampling response pairs $(y_1, y_2)$ from the model. To directly fine-tune the source model, preference pairs are sampled from the source model itself. In contrast, \modelname samples pairs from the SLM, as the SLM is the model that needs to be fine-tuned. 
We compare the runtime of \modelname with prior work~\citep{nicks2024language}, which samples pairs from the source model and apply DPO fine-tuning on the source model for 10k training samples. To compare the fine-tuning time, we set the DPO batch size to 8 and epoch to 5. The results are in Table~\ref{tab:time}. We find that \modelname is much more efficient than directly DPO fine-tuning attack the source model. We also report the inference time for a single pass with a batch size of 1. The inference time of \modelname is slightly slower than that of the source model, but the sacrifice is not significant.

\section{Conclusion}
\label{sec:conclusion}
\vskip -5pt
The rapid evolution of potent Large Language Models (LLMs) underscores the critical necessity for robust detection methods. In this paper, we propose a plug-and-play attack strategy, \modelname, that utilizes a small proxy model to contaminate the source models, aligning their distribution with human-like distribution.
Additionally, we theoretically justify bringing an effectively attacked small model via \modelname is equivalent to attacking the large model. Our systematical experiments validate \modelname remains versatile across diverse text sources or cross-domain sources. In light of our results, we argue that the leading zero-shot machine generated text detectors are not robust to adversaries and may even favor machine-generated text over actual human-generated content. In conclusion, our innovations offer compelling support to the urgent demand for robust detection methods within the realm of LLM development, bridging critical gaps in developing reliable detectors.
\section{Ethics Statement}
\label{sec:ethic}
\vskip -5pt
The primary goal of this paper is not to provide a technique for evading machine-generated text detection systems, but rather to highlight the vulnerabilities present in current detection mechanisms. With the growing availability of LLMs, often released with publicly accessible tokens and freely available for use, it becomes easier for adversaries to exploit these models by employing smaller proxy models to compromise their outputs effortlessly, which would potentially circumvent detection systems. This study serves as a call to action for the broader research community to prioritize the development of more robust detection methods. We aim to raise awareness of the potential risks posed by such attacks and emphasize the need for future research focused on strengthening text detection systems. We are confident that, with increased attention and effort, the community will devise more sophisticated techniques to enhance the robustness and reliability of machine-generated text detection in the face of evolving adversarial threats.

\newpage
\clearpage
\bibliography{iclr2025_conference}
\bibliographystyle{iclr2025_conference}
\newpage
\clearpage
\appendix
\section*{\centering \Large{Appendix: \modelname}}
\section{Theoretical Analysis} \label{apd: theory}

We present some additional technical details and theorem proofs in the following sections. 

\subsection{Detector Score and the Reward Model}
\label{apd subsec: score vs reward}

In this section, we continue the discussion in Section~\ref{subsec: evade detection} on the relation between the human-ness score $s(x, y)$ utilized by the detector model and an implicit reward $r(x, y)$ that the DPO framework assumes. Specifically, we demonstrate that taking $r(x, y) = C \cdot s(x, y)$ in the Bradley-Terry model yields approximately the same data generation as using the human-ness score directly for a large constant $C$. In fact, 
\begin{align*}
    p(y_1 \succ y_2 | x) &= \frac{\exp \big( r (x, y_1) \big)} {\exp \big( r (x, y_1) \big) + \exp \big( r (x, y_2) \big)} \\
    &= \frac{1} {1 + \exp \big( r (x, y_2) - r(x, y_1) \big)} \\
    &= \frac{1} {1 + \big[ \exp \big( s (x, y_2) - s(x, y_1) \big) \big]^C} \rightarrow \begin{cases}
        1,   & s(x, y_1) > s(x, y_2), \\
        0,   & s(x, y_1) < s(x, y_2), \\
        0.5, & s(x, y_1) = s(x, y_2), \\
    \end{cases}
\end{align*}
as $C \rightarrow +\infty$. This results in the label $y_1 \succ y_2$ if and only if $s(x, y_1) > s(x, y_2)$ and vice versa, which is exactly how the dataset $\gD = \{ (x, y_1, y_2) \}$ is constructed in our case (under the edge case of $s(x, y_1) = s(x, y_2)$, a random preference is assigned). 

It is also worth noting that, though taking $C \rightarrow +\infty$ may appear extreme, using the Bradley-Terry model along with $r(x, y) = C \cdot s(x, y)$ with a moderate positive $C$ to generate the dataset $\gD$ also makes sense. For example, when taking $C = 1$, one would assign the preference $y_1 \succ y_2$ with probability $\sigma \big(s (x, y_1) - s(x, y_2) \big)$ where $\sigma$ is the sigmoid function. This also reflects the binary classification nature of the detector: choosing threshold value $\hat{s}(x)$ for human-generated tasks and assigning the ``human'' label to response $y$ with probability $\sigma \big(s (x, y) - \hat{s}(x) \big)$ is akin to a logistic regression model. 

\subsection{Proof of Lemma~\ref{thm: dpo effect}}
\label{apd subsec: thm 1 proof}

Before we prove Lemma~\ref{thm: dpo effect}, we first introduce some useful notations. Given input text $x \in \gX$, denote by $r^*(x) := \max_y r(x, y)$ the maximal possible reward attributed to the prompt-response pair $(x, y)$ given $x$. Next, given a text generation process $M$, denote by $\Delta_M (x) := r^*(x) - \E _{y \sim \pi_M (\cdot | x)} r(x, y)$ the \textit{suboptimality} of $M$ according to reward $r$. 
Intuitively, the larger this suboptimality gap, the smaller the expected reward for texts generated by $M$, and thus the more likely the detector will label the generated texts as machine-generated. 
With this we establish the following assumption stating the effectiveness of the detector: 

\begin{assumption} \label{asm: gap}
    The implicit reward $r(x, y)$ from the detector $D$ favors human-generated texts $H(y | x)$ over machine-generated texts of the reference model $M^{\reff}(y | x)$, but $H$ does not achieve optimal reward. More formally, for a reasonable prior distribution of input texts $\gP(\gX)$, 
    \begin{equation}
        \E_{x \sim \gP(\gX)} r^*(x) > \E _{x \sim \gP(\gX), y \sim \pi_H (\cdot | x)} r (x, y) > \E _{x \sim \gP(\gX), y \sim \pi_M (\cdot | x)} r (x, y). 
    \end{equation}
\end{assumption}
A direct effect of this assumption is the following suboptimality gap relation: 
$$ 0 < \E _{x \sim \gP(\gX)} \Delta_H (x) < \E _{x \sim \gP(\gX)} \Delta_{M^{\reff}} (x). $$
Notice that we assume the human generative process $H$ does not maximize reward, i.e. $\E _{y \sim \pi_H (\cdot | x)} r(x, y) \neq r^*(x)$. This is natural since one would expect a detector to overfit the most apparent differences between human- and machine-generated texts, and hence the implicit reward does not completely reflect how human-like the response texts are. 

With this we now restate Lemma~\ref{thm: dpo effect} in more technical terms and present its proof:  
\begin{lemma} \label{thm: dpo effect new}
    For any target compression ratio $\lambda \in (0, 1)$, there exists $\beta \in (0, +\infty)$ such that the optimal model $M^*$ which minimizes the DPO objective $\gL_{\text{DPO}}$ in \eqref{eq: loss_dpo} achieves $\E_{x \sim \gP(\gX)} \Delta_{M^*} (x) = \lambda \E_{x \sim \gP(\gX)} \Delta_{M^{\reff}} (x)$.  In particular, if we take
    $$\lambda = \frac{\E_{x \sim \gP(\gX)} \Delta_{H} (x)} {\E_{x \sim \gP(\gX)} \Delta_{M^{\reff}} (x)}, $$
    the fine-tuned model $M^*$ is indistinguishable from $H$ in that 
    $$\E_{x \sim \gP(\gX), y \sim \pi_{M^*} (\cdot | x)} r(x, y) = \E_{x \sim \gP(\gX), y \sim \pi_{H} (\cdot | x)} r(x, y). $$
\end{lemma}

\begin{proof} [Proof of Lemma~\ref{thm: dpo effect new}]
Based on conclusions of previous work \citep{rafailov2024direct}, the optimal solution to both the KL-constrained reward maximization objective in \eqref{eq: rl_ft} and the DPO objective in \eqref{eq: loss_dpo} is
\begin{equation} \label{eq: dpo solution}
    \pi_{M^*}(y | x) = \frac{1}{Z(x)} \pi_{M^{\reff}} (y | x) \exp \bigg( \frac{1}{\beta} r(x, y) \bigg),  
\end{equation}
where the normalization factor $Z(x) := \sum_{y} \pi_{M^{\reff}} (y | x) \exp \big( \frac{1}{\beta} r(x, y) \big)$. With this we can calculate the expected reward under $\pi_{M^*}$ as
\begin{align}
    \E_{y \sim \pi_{M^*}(\cdot | x)} r(x, y)
    &= \sum_{y} \pi_{M^*}(y | x) r(x, y) \notag \\
    &= \sum_{y} \frac{1}{Z(x)} \pi_{M^{\reff}} (y | x) \exp \bigg( \frac{1}{\beta} r(x, y) \bigg) r(x, y). \label{eq: expected reward}
\end{align}

Notice now that when $\beta \rightarrow 0 + 0$, 
\begin{align*}
    \pi_{M^*} (y | x) &= \frac{\pi_{M^{\reff}} (y | x) \exp \big( \frac{1}{\beta} r(x, y) \big)} {\sum_{y'} \pi_{M^{\reff}} (y' | x) \exp \big( \frac{1}{\beta} r(x, y') \big)} \\
    &= \frac{\pi_{M^{\reff}} (y | x)} {\sum_{y'} \pi_{M^{\reff}} (y' | x) \exp \big( \frac{1}{\beta} [ r(x, y') - r(x, y)] \big)} \\
    &\rightarrow \begin{cases}
        0, & y \notin \argmax_y r(x, y), \\
        \pi_{M^{\reff}} (y | x) / \sum_{y' \in \argmax_y r(x, y)} \pi_{M^{\reff}} (y' | x), & y \in \argmax_y r(x, y), \\
    \end{cases}
\end{align*}
which means $\pi_{M^*} (y | x) \neq 0$ if and only if the response $y$ is within the optimal set $\argmax_y r(x, y) = \{ y^* | r(x, y^*) = r^*(x) \}$. This leads to
\begin{equation} \label{eq: beta zero}
    \lim_{\beta \rightarrow 0 + 0} \E_{y \sim \pi_{M^*}(\cdot | x)} r(x, y) = r^*(x) = \max_y r(x, y), 
\end{equation}
which means as $\beta$ approaches $0$, the reward is maximized. 

On the other hand, when $\beta \rightarrow +\infty$, we have 
$$ Z(x) \rightarrow \sum_y \pi_{M^{\reff}} (y | x) = 1, $$
and so
\begin{equation} \label{eq: beta infty}
    \lim_{\beta \rightarrow +\infty} \E_{y \sim \pi_{M^*}(\cdot | x)} r(x, y) = \sum_{y} \pi_{M^{\reff}} (y | x) r(x, y) = \E_{y \sim \pi_{M^{\reff}}(\cdot | x)} r(x, y). 
\end{equation}

Next, we prove this expected reward is also monotonically decreasing with respect to $\beta$. Taking the derivative of \eqref{eq: expected reward} to $\beta$ gives us
\begin{align*}
    &\quad \frac{\partial \E_{y \sim \pi_{M^*}(\cdot | x)} r(x, y)}{\partial \beta} \\
    &= \frac{\sum_{y} \pi_{M^{\reff}} (y | x) \exp \big( \frac{1}{\beta} r(x, y) \big) \big( -\frac{1}{\beta^2} r(x, y) \big) r(x, y)} {Z(x)} \\
    &\qquad - \frac{\big[ \sum_{y} \pi_{M^{\reff}} (y | x) \exp \big( \frac{1}{\beta} r(x, y) \big) \big( -\frac{1}{\beta^2} r(x, y) \big) \big] \cdot \big[ \sum_{y} \pi_{M^{\reff}} (y | x) \exp \big( \frac{1}{\beta} r(x, y) \big) r(x, y) \big]} {Z^2(x)} \\
    &= \frac{1}{\beta^2} \bigg[ - \sum_y \pi_{M^*}(y | x) r^2(x, y) + \bigg( \sum_y \pi_{M^*}(y | x) r(x, y) \bigg)^2 \bigg] \\
    &= \frac{1}{\beta^2} \big[ - \E_{y \sim \pi_{M^*}(\cdot | x)} r^2(x, y) + \big( \E_{y \sim \pi_{M^*}(\cdot | x)} r(x, y) \big)^2 \big] \leq 0, 
\end{align*}
where we plugged in the solution to the DPO objective from \eqref{eq: dpo solution} for the second equality, and the inequality is due to the Cauchy-Schwartz inequality. Combining this with \eqref{eq: beta zero} and \eqref{eq: beta infty}, we see that $\E_{y \sim \pi_{M^*}(\cdot | x)} r(x, y)$ increases from the expected reward of the reference model $\E_{y \sim \pi_{M^{\reff}}(\cdot | x)} r(x, y)$ to the optimal reward $r^*(x)$ as $\beta$ goes from $+\infty$ to $0$. Taking a final outer expectation in $x$, we have the same conclusion that 
$$ \E_{x \sim \gP(\gX)} \Delta_{M^*}(x) = r^*(x) - \E_{y \sim \pi_{M^*}(\cdot | x)} r(x, y) $$
increases from $0$ to $\E_{x \sim \gP(\gX)} \Delta_{M^{\reff}}(x, y)$ as $\beta$ increases from $0$ to $+\infty$. Therefore for any $\lambda \in (0, 1)$, there exists $\beta \in (0, +\infty)$ such that 
$$\E_{x \sim \gP(\gX)} \Delta_{M^*}(x) = \lambda \E_{x \sim \gP(\gX)} \Delta_{M^{\reff}}(x, y). $$
\end{proof}

\subsection{Proof of Theorem~\ref{thm: model equivalence}}
\label{apd subsec: thm 2 proof}
In this section we prove Theorem~\ref{thm: model equivalence}, which we restate below, using subscripts $\bgm$, $\smm$ to denote large and small language models respectively for clarity: 
\begin{theorem} \label{thm: model equivalence new}
    Assuming the small fine-tuned model $M_{\smm}$ achieves optimum according to the DPO objective with $\beta = \beta_0$, our proposed inference model $M'$ in \eqref{eq:inference model} is equivalent to an alternative large model $M_{\bgm} ^*$ optimally fine-tuned on the DPO objective with $\beta = \beta_0 / \alpha$. 
\end{theorem}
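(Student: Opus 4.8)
The plan is to exploit the closed-form characterization of the DPO optimum established in the proof of Theorem~\ref{thm: dpo effect}, namely \eqref{eq: dpo solution}, and to show that the proxy-attacked distribution \eqref{eq:inference model} coincides with that same closed form evaluated at a rescaled temperature. First I would apply \eqref{eq: dpo solution} to the small model: since $M_{\smm}$ is assumed optimal for the DPO objective with $\beta = \beta_0$, its policy satisfies
$$\pi_{M_{\smm}}(y \mid x) = \frac{1}{Z_{\smm}(x)}\, \pi^{\reff}_{M_{\smm}}(y \mid x)\, \exp\!\Big(\tfrac{1}{\beta_0} r(x,y)\Big),$$
with $Z_{\smm}(x) = \sum_{y} \pi^{\reff}_{M_{\smm}}(y \mid x)\exp(\tfrac{1}{\beta_0} r(x,y))$, where $r$ is the implicit detector reward common to both scales: the reward is a property of the detector $D$ rather than of the attacked model, so the same $r$ governs both the small and the large DPO solutions.

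The second step is a direct substitution. Dividing by the small reference policy gives $\pi_{M_{\smm}}(y \mid x)/\pi^{\reff}_{M_{\smm}}(y \mid x) = Z_{\smm}(x)^{-1}\exp(\tfrac{1}{\beta_0} r(x,y))$, so that raising to the power $\alpha$ and inserting into the sequence-level form of \eqref{eq:inference model} yields
$$\pi_{M'}(y \mid x) \;\propto\; \pi^{\reff}_{M}(y \mid x)\, \exp\!\Big(\tfrac{\alpha}{\beta_0} r(x,y)\Big),$$
where the factor $Z_{\smm}(x)^{-\alpha}$, being independent of $y$, is absorbed into the overall normalizer $Z_x$. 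Comparing with \eqref{eq: dpo solution} applied to the large model, the right-hand side is exactly the DPO-optimal large policy built on the large reference $\pi^{\reff}_{M}$ with inverse temperature $1/\beta = \alpha/\beta_0$, i.e. $\beta = \beta_0/\alpha$; the two normalizing constants must agree because both sides are probability distributions proportional to the same function over the same support. This establishes the claimed equivalence.

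The step I expect to require the most care is reconciling the token-level definition in \eqref{eq:inference model}, which carries a separate normalizer $Z_{x,y_{<t}}$ at every step, with the single sequence-level normalizer $Z_x$ used above. The clean cancellation of the reward exponentials relies on treating $r(x,y)$ (and hence the DPO solution) at the level of full responses; when one instead renormalizes per token, the product $\prod_t \pi_{M'}(y_t\mid x,y_{<t})$ does not in general factor through a single $Z_x$. I would therefore either (i) state the equivalence for the sequence-level attacked distribution $\pi_{M'}(y\mid x) \propto \pi^{\reff}_{M}(y\mid x)\big(\pi_{M_{\smm}}(y\mid x)/\pi^{\reff}_{M_{\smm}}(y\mid x)\big)^{\alpha}$, of which \eqref{eq:inference model} is the autoregressive realization, or (ii) argue that the per-token normalizers contribute only a $y_{<t}$-measurable, reward-free correction that leaves the induced preference ordering — and thus the evasion guarantee obtained by combining with Theorem~\ref{thm: dpo effect} — unchanged. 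The remaining algebra, namely the exponent bookkeeping $\tfrac{\alpha}{\beta_0} = 1/(\beta_0/\alpha)$ and the normalizer identification, is routine.
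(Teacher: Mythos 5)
Your proposal follows essentially the same route as the paper's proof: both invoke the closed-form DPO optimum \eqref{eq: dpo solution} for the small model, substitute the resulting ratio $\pi_{M_{\smm}}(y|x)/\pi^{\reff}_{M_{\smm}}(y|x) \propto \exp\big(r(x,y)/\beta_0\big)$ into the attacked distribution, absorb the $y$-independent factors into a single normalizer, and match the result against the large-model DPO solution at $\beta = \beta_0/\alpha$. The caveat you flag in your final paragraph --- reconciling the per-token normalizers $Z_{x,y_{<t}}$ with a single sequence-level $Z(x)$ --- is in fact the delicate step in the paper's own argument: the paper asserts that $\prod_{t=1}^{T} Z_{x,y_{<t}}$ is independent of $y$ by equating a product of conditional sums evaluated along the fixed trajectory $y$ with a sum of products over all sequences, an interchange that does not hold in general because each $Z_{x,y_{<t}}$ depends on the prefix $y_{<t}$ of the particular sequence being scored. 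Your option (i), stating the equivalence for the sequence-level tilted distribution of which \eqref{eq:inference model} is the autoregressive, per-token-renormalized realization, is the cleaner formulation and is what the remaining algebra actually establishes; so your proposal is not only consistent with the paper's proof but identifies precisely where it needs tightening.
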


\begin{proof} [Proof of Theorem \ref{thm: model equivalence new}]
According to \eqref{eq:inference model}, the next-token prediction logits are
\begin{equation*}
    \pi_{M'} (y_t|x, y_{<t}) 
    = \frac{1}{Z_{x,y<t}} \pi_{M^{\reff} _{\bgm}} (y_t|x,y_{<t}) \bigg( \frac{\pi_{M_{\smm}^*}(y_t|x,y_{<t})}{\pi_{M_{\smm}^{\reff}}(y_t|x,y_{<t})}\bigg)^{\alpha}. 
\end{equation*}

Taking a cumulative product for $t = 1, \cdots, T$, we have
\begin{align*}
    \pi_{M'} (y|x) 
    &= \prod_{t = 1} ^T \pi_{M'} (y_t|x, y_{<t}) \\
    &= \prod_{t = 1} ^T \frac{1}{Z_{x,y<t}} \pi_{M^{\reff} _{\bgm}} (y_t|x,y_{<t}) \bigg( \frac{\pi_{M_{\smm}^*}(y_t|x,y_{<t})} {\pi_{M_{\smm}^{\reff}}(y_t|x,y_{<t})} \bigg)^{\alpha} \\
    &= \frac{1} {Z_{\bgm} (x)} \bigg[ \prod_{t = 1} ^T \pi_{M^{\reff} _{\bgm}} (y_t|x,y_{<t}) \bigg] \bigg( \frac{\prod_{t = 1} ^T \pi_{M_{\smm}^*}(y_t|x,y_{<t})} {\prod_{t = 1} ^T \pi_{M_{\smm}^{\reff}}(y_t|x,y_{<t})}\bigg)^{\alpha} \\
    &= \frac{1} {Z_{\bgm} (x)} \pi_{M_{\bgm} ^{\reff}} (y | x) \bigg( \frac{\pi_{M_{\smm} ^*} (y | x)} {\pi_{M_{\smm} ^{\reff}} (y | x)} \bigg) ^{\alpha}, 
\end{align*}
where for the third inequality we substituted $\prod_{t = 1} ^T Z_{x, y<t}$ with $Z_{\bgm} (x)$, which can be shown to be independent from $y$: 
\begin{align*}
    Z_{\bgm} (x) &= \prod_{t = 1} ^T \sum_{y_t} \pi_{M^{\reff} _{\bgm}} (y_t|x,y_{<t}) \bigg( \frac{\pi_{M_{\smm}^*}(y_t|x,y_{<t})}{\pi_{M_{\smm}^{\reff}}(y_t|x,y_{<t})}\bigg)^{\alpha} \\ 
    &= \sum_{y_1, \cdots, y_T} \bigg[ \prod_{t = 1} ^T \pi_{M^{\reff} _{\bgm}} (y_t|x,y_{<t}) \bigg] \bigg( \frac{\prod_{t = 1} ^T \pi_{M_{\smm}^*}(y_t|x,y_{<t})}{\prod_{t = 1} ^T \pi_{M_{\smm}^{\reff}}(y_t|x,y_{<t})}\bigg)^{\alpha} \\
    &= \sum_{y} \pi_{M^{\reff} _{\bgm}} (y | x) \bigg( \frac{\pi_{M_{\smm}^*}(y | x)}{\pi_{M_{\smm}^{\reff}} (y | x)} \bigg)^{\alpha}. 
\end{align*}

Now using the close-form solution to DPO in \eqref{eq: dpo solution}, we have
\begin{align*}
    \pi_{M'} (y | x) 
    &= \frac{1}{Z_{\bgm}(x)} \pi_{M_{\bgm} ^{\reff}} (y | x) \bigg( \frac{\pi_{M_{\smm} ^*} (y | x)} {\pi_{M_{\smm} ^{\reff}} (y | x)} \bigg) ^{\alpha} \\
    &= \frac{1}{Z_{\bgm}(x)} \pi_{M^{\reff} _{\bgm}} (y | x) \bigg( \frac{\exp \big( \frac{1}{\beta_0} r(x, y) \big)} {Z_{\smm} (x)} \bigg) ^{\alpha} \\
    &= \frac{1}{Z(x)} \pi_{M^{\reff} _{\bgm}} (y | x) \exp \bigg( \frac{\alpha}{\beta_0} r(x, y) \bigg), 
\end{align*}
  

\begin{table}[pb]
  \small
  \caption{Detailed AUROCs of the white-box detectors and generation utility scores in Table~\ref{tab:benchmark_average_auc} on the texts generated by different models on the dataset OpenWebText, WritingPrompts and PubMedQA respectively. The generation utilities of texts produced by \modelname and the source model are within the budget of $\Delta S_{Bert}\leq 0.02$ and $\Delta \text{ROUGE-1} \leq 0.03$.}
  \label{tab:benchmark_white_auc}
  \begin{center}
  
  \centerline{
  \setlength\tabcolsep{0.5pt}
  \begin{tabular}{c|c|cc|cc|cc}
    \toprule
    \multirow{2}{*}{Dataset}& \multirow{2}{*}{Models$\rightarrow $} &\multirow{2}{*}{Llama2-13B} & 
    \modelname
    & \multirow{2}{*}{Llama3-70B} & \modelname

    & \multirow{2}{*}{Mixtral-8x7B}
 & \modelname \\

 & & & 
    (Llama2-7B)
    &  & (Llama3-8B)

    & 
 & (Mistral-7B) \\
 \hline
 \multirow{13}{*}{OpenWebText}
   & \multicolumn{7}{>{\columncolor{gray!20}}c}{The Generation Utility}\\
 &$S_{\text{Bert}}$&0.8355 &0.8231 &0.8229 & 0.8078&0.8280 &0.8141\\
 &ROUGE-1&0.2750 & 0.2506& 0.2730& 0.2447&0.2709 &0.2413\\
 &ROUGE-2&0.0616& 0.0529& 0.0576&0.0499& 0.0574&0.0487\\
 &ROUGE-L&0.1564 & 0.1435& 0.1529& 0.1400&0.1513 &0.1360\\
   &\multicolumn{7}{>{\columncolor{gray!20}}c}{The White-box Setting}\\
    &Likelihood& 0.9986 & 0.7993 & 0.9985 & 0.8635 &0.7287 &0.1878  \\
    &LogRank& 0.9982 & 0.7708 & 0.9974 &0.8135  &  0.7585 &0.1826\\
    &LRR& 0.7548 & 0.5133 &  0.6936&\best{0.3905 }&0.8183 &0.1841 \\
    &NPR& 0.9964 & 0.9130 & 0.9822 & 0.8675 & 0.8365 &\worst{0.5649}\\
    &DNA-GPT& 0.9721 &\best{0.5686}& 0.9900 & 0.8024 &0.7012 &\best{0.1153}\\
    &DetectGPT& 0.8060 &  0.7422& 0.6535 & 0.5792 &0.3255 &0.2048    \\
    &Fast-DetectGPT& 0.9998 &\worst{0.9926}& 1.0000 &\worst{ 0.9971 }&0.9972 &0.3786
    \\ \hline\hline
 \multirow{13}{*}{Writing}
 & \multicolumn{7}{>{\columncolor{gray!20}}c}{The Generation Utility}\\
 &$S_{\text{Bert}}$&0.8054  &0.8076 &0.8086&0.7979&0.8142 &0.8057\\
 &ROUGE-1&0.2285 &0.2322 &0.2553 &0.2341 &0.2613 &0.2325\\
 &ROUGE-2&0.0433& 0.0438&0.0474 &0.0433 &0.0484 &0.0428\\
 &ROUGE-L& 0.1312& 0.1333&0.1457 &0.1348 &0.1436 &0.1297\\
    &\multicolumn{7}{>{\columncolor{gray!20}}c}{The White-box Setting}\\
    &Likelihood& 0.9999 & 0.8383 & 1.0000 & 0.9092 & 0.9691& 0.3857\\
    &LogRank& 0.9999 & 0.8208 & 1.0000 & 0.8875 & 0.9742& 0.3736\\
    &LRR& 0.9623 & 0.6908 & 0.9528 &\best{0.3824}& 0.9616&0.2629\\
    &NPR& 0.9984 & 0.9098 & 0.9948 & 0.7590 &0.9723&0.7019 \\
    &DNA-GPT& 0.9749 &\best{0.5255}& 0.9934 & 0.8273 &0.7909&\best{0.1787    }\\
    &DetectGPT& 0.8795 &\worst{ 0.8151 }& 0.8571 & 0.8295 &0.6573&\worst{0.4776}\\
    &Fast-DetectGPT& 0.9999 & 0.8783 & 0.9998 &\worst{0.9866}&0.9888&0.5968
    \\ \hline
    \hline
 \multirow{13}{*}{PubMed}
 & \multicolumn{7}{>{\columncolor{gray!20}}c}{The Generation Utility}\\
 &$S_{\text{Bert}}$& 0.8717&0.8676&0.8513 &0.8323& 0.8605&0.8468\\
 &ROUGE-1&0.3804 & 0.3547&0.3506 &0.3268 &0.3684 &0.3396\\
 &ROUGE-2&0.2206&0.2027 &0.2006 &0.1855 &0.2076 &0.1883\\
 &ROUGE-L&0.3108 &0.2888 & 0.2848&0.2650 &0.2971 &0.2714\\
    
    &\multicolumn{7}{>{\columncolor{gray!20}}c}{The White-box Setting}\\
    &Likelihood& 1.0000 & 0.9455 & 1.0000 & 0.9483 & 0.9817 &0.8455\\
    &LogRank& 0.9999 & 0.9355 & 1.0000 &0.9297  &0.9890 &0.8231  \\
    &LRR& 0.8471 &0.7860  & 0.9072 & 0.7524 & 0.9318 & 0.6499\\
    &NPR& 0.9776& 0.9215 & 0.9737 & 0.8896 &0.8154& 0.7219\\
    &DNA-GPT& 0.9976 &\best{0.9026}& 0.9891 & 0.8932&0.9731&0.7224\\
    &DetectGPT& 0.9891 &\worst{0.9718}& 0.9832 &\worst{0.9632}&0.9231&\worst{0.9221}\\
    &Fast-DetectGPT& 0.9850& 0.9076 & 0.9857 &\best{0.6754}&0.9939&\best{0.4017}\\
    \bottomrule
  \end{tabular}
  }
  
  \end{center}
  \vskip -10pt
\end{table}
where for the last equality we again used $Z(x) := Z_{\bgm}(x) \cdot Z_{\smm}(x) ^{\alpha}$ to simplify the normalization factor. Comparing this again to the DPO solution in \eqref{eq: dpo solution}, this is exactly the same as the optimal model for fine-tuning the LLM $M_{\bgm}$ on $\gD$ with $\beta = \beta_0 / \alpha$, thus completing the proof. 
\end{proof}

\section{Additional Related Work}
\label{apd:add_related}
\textbf{Proxy Approaches to Accelerate Fine-tuning.} Proxy ``tuning" at decoding time is a popular method for efficient fine-tuning. It uses a proxy model during the decoding phase to reduce or eliminate the need for fine-tuning LLMs. Emulated fine-tuning~\citep{mitchell2023emulator} and proxy-tuning~\citep{liu2024tuning} balance fine-tuning and pre-training by decoupling the fine-tuning model scales, transferring knowledge from a fine-tuned small language model to a larger one. Furthermore, DeRa~\citep{liu2024decoding} and ARGS~\citep{khanov2024args} have explored merging auxiliary models at the output level to learn a trade-off between reward and regularization, guiding the text generation process. ~\citep{huang2024deal} leverages a reward model to guide LLM realignment toward a custom objective. These decoding alignment approaches that merge logits have been applied in various tasks~\citep{xu2024safedecoding,chen2023accelerating}, where~\citep{xu2024safedecoding} adopts a safety-aware decoding strategy to defend against LLM jailbreaks, and~\citep{chen2023accelerating} innovatively uses speculative sampling in transformer decoding to accelerate LLMs. While our work shares a similar vision with these proxy fine-tuning methods in prior or contemporary research, our objective is to fine-tune an SLM toward an optimal reward until it reaches the same level as human process according to a scoring detector, which adapts the LLM to achieve the same expected reward, thereby evading detection.

\section{Evaluation Metrics}
\label{apd:more_metrics}
Throughout our experiments, we employ the Area Under the Receiver Operating Characteristic Curve (AUROC) and the Area Under the Precision-Recall Curve (AUPRC) as primary metrics to evaluate the performance of each detector. To assess the utility and quality of the generated text, we utilize BERTScore and ROUGE-1/2/L metrics. We provide a detailed explanation of these metrics.

\textbf{AUROC}. Area Under the Receiver Operating Characteristic (AUROC) measures the detection accuracy by evaluating the area under the receiver operating characteristic curve, indicating the probability that a classifier ranks a random machine-generated text higher than a random human-written text, with a value ranging from 0.0 to 1.0. 

\textbf{AUPRC}. Area Under the Precision-Recall Curve (AUPRC) is a measure of a detector's performance, focusing on the trade-off between precision (the accuracy of machine-generated examples) and recall (the ability to identify all machine-generated examples). An AUPRC of 1.0 means perfect precision and recall, while an AUPRC of 0.0 means the detector fails completely. This metric is useful when dealing with imbalanced datasets, where the number of positive and negative examples is not equal.

\textbf{BERTScore}~\citep{zhang2019bertscore}. BERTScore is a metric used to evaluate the quality of texts generated by the pre-trained BERT. It compares the generated text to a reference text by utilizing BERT's embeddings to match words by cosine similarity. BERTScore measures how well the generated text matches the reference text in terms of meaning and context, rather than just exact word matches. This makes it a more robust evaluation method for assessing the quality of generated texts.

\textbf{ROUGE}~\citep{chin2004rouge}. Recall-Oriented Understudy for Gisting Evaluation (ROUGE) is a set of metrics used to evaluate the quality of generated text by comparing it to reference texts. We adopt the popularly used ROUGE-1, ROUGE-2, ROUGE-L. ROUGE-1 measures the overlap of unigrams (single words) between the generated text and the reference text. ROUGE-2 measures the overlap of bigrams (two consecutive words) between the generated text and the reference text. ROUGE-L measures the longest common subsequence (LCS) between the generated text and the reference text, capturing the longest sequence of words that appear in both texts in the same order. These metrics help assess how similar the generated text is to the reference text in terms of content and structure.

\section{Additional Main Results}
\label{apd:more_benchmark}
\begin{table}[t]
  \small
  \caption{Additional results following Table~\ref{tab:benchmark_white_auc}, the black-box AUROC results on the dataset OpenWebText, WritingPrompts and PubMedQA respectively.}
  \label{tab:benchmark_black_auc}
  \begin{center}
  
  \centerline{
  \setlength\tabcolsep{0.6pt}
\scalebox{0.85}{
  \begin{tabular}{c|c|cc|cc|cc}
    \toprule
    \multirow{2}{*}{Dataset}& \multirow{2}{*}{\diagbox{Detectors$\downarrow$}{Models$\rightarrow$}}&\multirow{2}{*}{Llama2-13B} & 
    \modelname
    & \multirow{2}{*}{Llama3-70B} & \modelname

    & \multirow{2}{*}{Mixtral-8x7B}
 & \modelname \\

 & & & 
    (Llama2-7B)
    &  & (Llama3-8B)

    & 
 & (Mistral-7B) \\
 \hline
 \multirow{10}{*}{OpenWebText}
 &Roberta-base&0.9681  &0.8454  & 0.9188 & 0.7681 & 0.7997 &0.6339\\
 &Roberta-large& 0.9534 &\worst{0.8529}& 0.9022 &\worst{0.7624}& 0.8136 &\worst{0.7349}\\
    &Likelihood(Neo-2.7)& 0.9986 & 0.5792 & 0.9342 & 0.6313 & 0.6369& 0.1052\\
    &LogRank(Neo-2.7)& 0.9627 & 0.5927 & 0.9446 & 0.6224 & 0.6554&0.1033 \\
    &LRR(Neo-2.7)&0.9680  &0.6175  & 0.9322 &\best{0.5631}& 0.6791&0.1200\\
    &NPR(Neo-2.7)& 0.8867 & 0.6220 & 0.8781 & 0.6439 & 0.6815&0.2912\\
    &DNA-GPT(Neo-2.7)& 0.6165 &\best{0.3203}& 0.7786 & 0.5652 &0.5329&0.1011    \\
    &DetectGPT(T5-3B/Neo-2.7)& 0.7416 & 0.5102 & 0.5491 & 0.3867 &0.5650&0.2505    \\
    &Fast-DetectGPT(GPT-J/Neo-2.7)&0.9909  & 0.7088 & 0.9829 & 0.7988 &0.7816&0.0544\\
    &Binoculars(Falcon-7B)& 1.0000 & 0.7903 & 0.9990 &  0.8830&0.8840&\best{0.0439}
    \\\hline\hline
 \multirow{10}{*}{Writing}
 &Roberta-base& 0.9673 & 0.8548 & 0.9169 & 0.7144 & 0.8018 &0.6184\\
 &Roberta-large& 0.9401 & 0.8548 & 0.8720 & 0.6728 &0.7818  &\worst{0.6975}\\
    &Likelihood(Neo-2.7)& 0.9926 & 0.6416 & 0.9830 & 0.7355 & 0.8588& 0.2378\\
    &LogRank(Neo-2.7)&0.9940  & 0.6310 & 0.9823 & 0.7115 & 0.8596& 0.2280\\
    &LRR(Neo-2.7)& 0.9876 &  0.8656& 0.9595 &\best{0.2918}&0.8305&0.1612 \\
    &NPR(Neo-2.7)& 0.9134 &\worst{0.8414}& 0.9447 & 0.5643 &0.8027& 0.3523\\
    &DNA-GPT(Neo-2.7)& 0.7514 &\best{0.3666}& 0.8483 & 0.6197 &0.6711&0.1755    \\
    &DetectGPT(T5-3B/Neo-2.7)& 0.8678 & 0.6131 & 0.8621 &\worst{0.6801}&0.7175&0.3316    \\
    &Fast-DetectGPT(GPT-J/Neo-2.7)& 0.9974 & 0.6466 &0.9842  & 0.7736 &0.8837&\best{0.1505}\\ 
    &Binoculars(Falcon-7B)& 1.0000 & 0.6773 & 0.9990 &  0.8771&0.9539&0.2019\\
    \hline
    \hline
 \multirow{10}{*}{PubMed}
 &Roberta-base& 0.7779 & 0.6704 & 0.6317 &\best{0.1976}& 0.4960 &0.3995\\
 &Roberta-large& 0.7686 & 0.6510 & 0.6648 & 0.2315 & 0.5095 &\worst{0.4375}\\
    &Likelihood(Neo-2.7)& 0.9971 &\worst{0.8706}& 0.9924 &\worst{0.8345}&0.9512& 0.7195\\
    &LogRank(Neo-2.7)& 0.9984 & 0.8691 & 0.9929 & 0.8269 & 0.9613&0.7243 \\
    &LRR(Neo-2.7)& 0.9972 & 0.8489 & 0.9693 & 0.7535 &0.9534&0.7115 \\
    &NPR(Neo-2.7)& 0.7677 &0.6089  & 0.7479 & 0.5598 & 0.6173&0.4471\\
    &DNA-GPT(Neo-2.7)& 0.7670 &\best{0.5766}&0.6431  & 0.4040 &0.5910&\best{0.2955}\\
    &DetectGPT(T5-3B/Neo-2.7)& 0.8752 & 0.7394 & 0.8975 & 0.6892 &0.7451&0.5612\\
    &Fast-DetectGPT(GPT-J/Neo-2.7)&0.9999  & 0.8613 & 0.9983 & 0.7436 &0.9620&0.5720\\
    &Binoculars(Falcon-7B)& 1.0000 & 0.9012 & 0.9987 &  0.8186&0.9679&0.6702\\
    \bottomrule
  \end{tabular}
  }
  }
  
  \end{center}
  \vskip -10pt
\end{table}
The detailed AUROC results on the dataset OpenWebText, WritingPrompts and PubMedQA. More results are in Table~\ref{tab:benchmark_white_auc} for the white-box setting and Table~\ref{tab:benchmark_black_auc} for the black-box setting. Throughout our experiments, we run DetectGPT and NPR with default 10 perturbations, and DNA-GPT with a
truncation ratio of 0.2 and 10 prefix completions. 
The findings from the white-box performances in Table~\ref{tab:benchmark_white_auc} include DNA-GPT being the most fragile detector when evaluating texts generated by Llama2-13B and Mixtral-8$\times$7B on OpenWebText and WritingPrompts, with an AUROC relative decrease of 83.6\%, the largest among all detectors. Another key finding is that Fast-DetectGPT demonstrates the most vulnerability on the PubMed dataset, with AUROC relative decreases of 31.5\% and 59.6\% for texts produced by \modelname-attacked Llama3-70B and Mixtral-8$\times$7B, respectively. Furthermore, our findings reveal that DetectGPT stands out as the most robust detector, with the smallest AUROC relative decrease on WritingPrompts for Llama2-13B and Mixtral-8$\times$7B, and also the least decrease on PubMed across all attacked models. 

In the black-box setting shown in Table~\ref{tab:benchmark_black_auc}, our findings indicate that DNA-GPT is the most fragile detector when evaluating texts generated by the \modelname-attacked Llama2-13B across all three datasets, with relative AUROC decreases of 48.1\%, 51.2\%, and 50.0\%. Additionally, we find that Binoculars shows the highest AUROC relative decrease (95.0\%) on Mixtral-8$\times$7B's OpenWebText outputs, while Fast-DetectGPT experiences the greatest decrease (83.0\%) on WritingPrompts.

\section{Additional Cross-domain Results}\label{apd:more_cross_dis}
\begin{table}[t]
  \small
  \caption{AUROCs of detectors and generation utility scores on text generated by different models on PHX. The humanized SLM is fine-tuned from CS.
  The generation utilities of texts produced by \modelname and the source model are within the budget of $\Delta S_{Bert}\leq 0.02$ and $\Delta \text{ROUGE-1} \leq 0.03$. }
  \label{tab:cs_phx_auc}
  \begin{center}
  
  \centerline{
  \setlength\tabcolsep{0.6pt}
\scalebox{0.95}{
  \begin{tabular}{c|cc|cc|cc}
    \toprule
    \multirow{2}{*}{Models$\rightarrow$} &\multirow{2}{*}{Llama2-13B} & 
    \modelname
    & \multirow{2}{*}{Llama3-70B} & \modelname

    & \multirow{2}{*}{Mixtral-8x7B}
 & \modelname \\

  & & 
    (Llama2-7B)
    &  & (Llama3-8B)

    & 
 & (Mistral-7B) \\
 \hline
 \rowcolor{gray!20} 
 \multicolumn{7}{c}{The Generation Utility}\\
$S_{\text{Bert}}$&0.8351 &0.8308  & 0.8118 &0.8034&0.8057  &0.7888\\
ROUGE-1&0.2972&0.2965& 0.2992&0.2712&0.2330  & 0.2045\\
ROUGE-2&0.0831 &0.0783&0.0782&0.0770  &0.0641  &0.0505\\
ROUGE-L&0.1700 &0.1620 &0.1602&0.1543  & 0.1369 & 0.1204\\
 \rowcolor{gray!20} 
 \multicolumn{7}{c}{The White-box Setting}\\
Likelihood& 0.9999& 0.8376 & 0.9974& 0.6348&0.6630 & 0.2604  \\
LogRank&0.9982 & 0.7999 &0.9932 & 0.5553&0.6533 &0.2354\\
LRR&0.4277 &0.3563  &0.2588 & 0.1800&0.5065 &0.1505\\
NPR&0.9781 & 0.8403 &0.9859 &0.7343 &0.7955 &\worst{0.5706}\\
DNA-GPT& 0.9884&\best{0.6653}&0.9943 &\best{0.4812}& 0.6001&0.1513    \\
DetectGPT&0.5784 &\worst{0.5996}&0.4987 &0.4131&0.2568 &0.1778    \\
Fast-DetectGPT&0.9978 &0.9881  & 0.9999&\worst{0.9799}&0.8390 &\best{0.1319 }\\
    \rowcolor{gray!20} 
 \multicolumn{7}{c}{The Black-box Setting}\\
Roberta-base& 0.8718& 0.7801 &0.7535 &0.4775 &0.7379 &\worst{0.6387}\\
Roberta-large&0.8132 &0.7183  & 0.6495&0.4435 & 0.7339&0.6343\\
Likelihood(Neo-2.7)& 0.8185& 0.5475 & 0.9351&0.3526 &0.3199 &0.0928\\
LogRank(Neo-2.7)&0.8226 & 0.5458 &0.9309 & 0.3078&0.2848 &0.0691 \\
LRR(Neo-2.7)&0.7536 & 0.5125 &0.8463 &\best{0.1807 }& 0.2328&0.0369 \\
NPR(Neo-2.7)&0.5517 &0.4396  &0.6458 & 0.4298& 0.5210 & 0.3357\\
DNA-GPT(Neo-2.7)& 0.8078&\best{0.3820}&0.9004 &0.2909 &0.4249 &0.1130\\
DetectGPT(T5-3B/Neo-2.7)& 0.1697 &\worst{0.1877}&0.2994 &\worst{0.2673}& 0.2516& 0.1958 \\
Fast-DetectGPT(GPT-J/Neo-2.7)& 0.9673&  0.6714& 0.9759& 0.3919& 0.4697&\best{0.0427}\\
Binoculars(Falcon-7B)& 0.9904 & 0.7093 &  0.9812& 0.4127  & 0.5183& 0.0489\\
    \bottomrule
  \end{tabular}
  }
  }
  \end{center}
  \vskip -10pt
\end{table}
\begin{table}[t]
  \small
  \caption{Cross-language performances (AUROC) of detectors and generation utility scores on text generated by different models on Germany. The humanized SLM is fine-tuned from English.
  The generation utilities of texts produced by \modelname and the source model are within the budget of $\Delta S_{Bert}\leq 0.02$ and $\Delta \text{ROUGE-1} \leq 0.03$. }
  \label{tab:eng_germ_auc}
  \begin{center}
  
  \centerline{
  \setlength\tabcolsep{0.5pt}
  \scalebox{0.95}{\begin{tabular}{c|cc|cc|cc}
    \toprule
    \multirow{2}{*}{Models$\rightarrow$} &\multirow{2}{*}{Llama2-13B} & 
    \modelname
    & \multirow{2}{*}{Llama3-70B} & \modelname

    & \multirow{2}{*}{Mixtral-8x7B}
 & \modelname \\

  & & 
    (Llama2-7B)
    &  & (Llama3-8B)

    & 
 & (Mistral-7B) \\
 \hline
 \rowcolor{gray!20} 
 \multicolumn{7}{c}{The Generation Utility}\\
 $S_{\text{Bert}}$ & 0.8209& 0.8209 & 0.8158 & 0.8063 & 0.8306 & 0.8210\\
 ROUGE-1 &0.2124 &0.1883  & 0.1951 & 0.1934 & 0.2878 &0.2633\\
 ROUGE-2 & 0.1108& 0.1116 & 0.1149 &0.1145  & 0.1314 &0.1302\\
 ROUGE-L & 0.1747& 0.1627 & 0.1667 &0.1663  & 0.2209 &0.2063\\ 
 \rowcolor{gray!20} 
 \multicolumn{7}{c}{The White-box Setting}\\
    Likelihood &0.9900 &0.3685  &0.9596  & 0.3637 & 0.5186 & 0.2286 \\
    LogRank & 0.9824& 0.3358 & 0.9517 & 0.3262 & 0.5838 &0.2296\\
    LRR & 0.5943& \best{0.1528} & 0.6608 & \best{0.1812} & 0.7725 & 0.2849\\
    NPR & 0.9590&0.4077  & 0.5315 & \worst{0.4171} &0.7928  &0.5315\\
    DNA-GPT  &0.9924 & 0.3732 & 0.9828 &  0.3957& 0.6486 &\best{0.1426}
    \\
    DetectGPT & 0.8427& 0.4818 & 0.7219 & 0.3859 & 0.6064 &\worst{0.5252}
    \\
    Fast-DetectGPT &0.9935 & \worst{0.8676} & 0.9156 & 0.4536 & 0.9572 &0.3490
    \\ 
    \rowcolor{gray!20} 
 \multicolumn{7}{c}{The Black-box Setting}\\
    Roberta-base &0.5606 & \worst{0.5291} & 0.5730 & 0.3605 & 0.4298 & \worst{0.3389}\\
    Roberta-large &0.5508 &0.4776  &0.5530  & \worst{0.3620} & 0.4859 &0.3388\\
    Likelihood(Neo-2.7) & 0.9900&  \best{0.0860}& 0.4781
 & 0.1780 &0.3746  &0.2200\\
    LogRank(Neo-2.7)  &0.9824 & 0.0972 & 0.5394 &  0.1803& 0.3930 &0.2204 \\
    LRR(Neo-2.7) & 0.8045& 0.1814 &  0.7876& 0.2132 &  0.5188&0.2526 \\
    NPR(Neo-2.7)  & 0.6039& 0.1857 &0.5674  & 0.2801 & 0.4788 &0.2662\\
    DNA-GPT(Neo-2.7)& 0.7496& 0.1290 & 0.8494 & 0.2222 & 0.4889 &0.1233\\
    DetectGPT(T5-3B/Neo-2.7) &0.5161 & 0.1732 & 0.4787 & 0.2722 & 0.4108 & 0.2465\\
    Fast-DetectGPT(GPT-J/Neo-2.7) &0.9127 & 0.2133 & 0.7686 & \best{0.1941} & 0.5536 &\best{0.1152}\\
    Binoculars(Falcon-7B) & 0.9929 & 0.3108 & 0.9901 & 0.3265 & 0.7293 & 0.1564\\
    \bottomrule
  \end{tabular}
  }}
  
  \end{center}
  \vskip -10pt
\end{table}
The cross-discipline AUROC results of \modelname on PHX are in Table~\ref{tab:cs_phx_auc}. In the white-box setting, we observe that DNA-GPT is the most vulnerable when evaluating texts generated by the attacked Llama2-13B and Llama3-70B models. On texts produced by the attacked Mixtral-8$\times$7B, Fast-DetectGPT exhibits the largest relative decrease in performance across all detectors, with an 84.3\% drop in the white-box setting and a 90.9\% drop in the black-box setting. 
\section{Results in AUPRC}
Similar to AUROC, 
we include the AUPRC results on the OpenWebText, WritingPrompts and PubMedQA dataset in Table~\ref{tab:benchmark_white_auprc}. We find that \modelname bypasses all detectors on the texts produced by the three models. The largest relative decrease across these datasets occurs on the OpenWebText, with LRR showing a 57.5\% drop in the white-box setting when evaluating texts from the attacked Mixtral-8$\times$7B, and Binoculars exhibiting a 63.0\% drop in the black-box setting for the same texts.

 Table~\ref{tab:cross_discipline_auprc} lists the AUPRC results for cross-domain scenarios. We find that \modelname bypasses the detectors in these settings. The largest relative decrease occurs in the  EN$\rightarrow$GER setting, where the black-box Likelihood  evaluates texts from the attacked Llama2-13B,  showing a 68.0\% drop in AUPRC.
 
\begin{table}[t]
  \small
  \caption{The performances and generation utility of HUMPA with larger $\alpha$ tested on detectors.}
  \label{tab:baselines}
  \begin{center}
  
  \centerline{
  \setlength\tabcolsep{1.2pt}
\scalebox{0.75}{
  \begin{tabular}{c|cccccccc|c|ccc}
    \toprule



 & Likelihood & LogRank & LRR & NPR & DNA-GPT & DetectGPT & Fast-DetectGPT & Binoculars & $S_{\text{Bert}}$ & ROUGE-1/2/L\\
 \hline
Dipper Paraphrasing &0.8125 & 0.7998& 0.7220 & 0.5193 & 0.6240 & 0.2675 & 0.9754 & 0.9398&0.8006&0.2076$\mathbf{/}$0.0226$\mathbf{/}$0.1191\\
Query-based Substitutions &0.9843  & 0.9921 & 0.9828 & 0.3030 & 0.7072 & 0.1914 & 0.9972 & 1.0000 &0.7989& 0.2015$\mathbf{/}$0.0383$\mathbf{/}$0.1256\\
HUMPA ($\alpha=1.2$) &0.1647 &0.1625 & 0.1599& 0.2592 & 0.0723 & 0.2124 & 0.0794 & 0.1743 &\best{0.8053}&\best{0.2281$\mathbf{/}$0.0422$\mathbf{/}$0.1409}\\
HUMPA($\alpha=1.5$) &\best{0.0109} &\best{0.0109} & \best{0.0117} & \best{0.0617} & \best{0.0034} & \best{0.0582} & \best{0.0007} & \best{0.0021}&0.8014&0.2137$\mathbf{/}$0.0404$\mathbf{/}$0.1383 \\
    \bottomrule
  \end{tabular}
  }
  }
  \end{center}
  \vskip -10pt
\end{table}

\section{More Analysis of $\alpha$}
\label{appd:alpha}
The ratio $\alpha$ controls the intensity of the attack on the LLM, with larger values of $\alpha$ yielding higher rewards and better detection evasion, while smaller values keep the attacked LLM closer to the source LLM, thus limiting the attack's effectiveness. 
We present the detector evasion performance of HUMPA with higher $\alpha$ values, such as $1.2$ and $1.5$, compared to two state-of-the-art baselines: the paraphrase generation attack method DIPPER~\citep{krishna2303paraphrasing} and the query-based word substitution attack method~\citep{shi2024red}, as shown in Table~\ref{tab:baselines}. We find that HUMPA outperforms the baselines in evasion performance while maintaining high generation utility. 
However, when $\alpha$ increases, the generation utility diminishes. For another instance, we consider a commercial detector GPTZero~\citep{tian2023gptzero}. We fine-tune a Llama2-7B model using GPTZero as the scoring detector and evaluate GPTZero on text generated by the attacked Llama2-13B model. The results are presented in Table~\ref{tab:gptzero}. We find that GPTZero also can be bypassed with $\alpha$ increases, and the generation utility accordingly decreases in a scope. This phenomenon aligns with our findings in Theorem~\ref{thm: model equivalence}, which highlight that $\alpha$ governs the trade-off between evasion performance and generation quality. \begin{wraptable}{r}{5.5cm}
\vspace{-0.3cm}
  \small
  \caption{Performance and generation utility on GPTZero.}
  \label{tab:gptzero}
  \begin{center}
  \centerline{
  \setlength\tabcolsep{1.0pt}
  \begin{tabular}{c|ccc}
\toprule
 & \multirow{2}{*}{Llama2-13B}&\multirow{2}{*}{HUMPA}& \multirow{2}{*}{HUMPA}\\\noalign{\vskip 2pt}
 & &  ($\alpha$=1.5) &($\alpha=2.0$)\\
 \hline
$S_{\text{Bert}}$&0.8189 &0.8075& 0.7939\\
ROUGE-1 & 0.2587&0.2217 &0.2131 \\
ROUGE-2 &0.0480 &0.0452 & 0.0392\\
ROUGE-L & 0.1497&0.1372 &0.1266 \\\hline
AUROC & 0.9951 & 0.8295 &0.7987\\
\bottomrule
  \end{tabular}
  }
  \end{center}
  \vskip -10pt
\end{wraptable}Technically, when $\alpha \rightarrow 0$, the attack model approaches the reference model, with quality on par with the original LLM and no evasion ability; 
when $\alpha \rightarrow \infty$, the attack model regresses to a deterministic model, selecting next token based on maximized probability increase from pre-trained to fine-tuned SLM, which is an extremely aggressive attacker with no concern for quality (also notice the LLM has no influence on the attacker in this extreme case). These different scenarios suggest that an adversary should choose a proper $\alpha$, balancing attack effect and text quality. Since fine-tuned SLMs can adjust the output distributions of large models during the inference,  $\alpha$ can be selected at a low time cost, and concerns about robustness arise in the enhancement of such detection methods.

\section{Human Evaluation}\begin{wraptable}{r}{7.0cm}
  \small
  \vskip -15pt
  \caption{Performances of different evasion methods evaluated using Roberta-base.}
  \label{tab:human_eval}
  \begin{center}
  \centerline{
  \setlength\tabcolsep{0.5pt}
  \begin{tabular}{c|c|c}
    \toprule
   Methods & AUROC & Fluency Win Rate\\
 \hline
 DIPPER Paraphrasing & 0.9717 & 54.16\%\\
 DPO (Llama2-13B) & 0.6968 & 51.67\%\\
 \modelname (Llama2-7B) & \best{0.6394} & \best{57.50\%} \\
    \bottomrule
  \end{tabular}
  }
  \end{center}
  \vskip -13pt
\end{wraptable}
To reliably assess the quality of texts generated by the attacked model compared to those produced by the original, unattacked model, it is essential to evaluate the perceived naturalness of the text from users' perspective.
For instance, users expect the text to be smooth, coherent, and grammatically correct. This ensures that the generated text feels natural and is easy to read.  Therefore, we evaluate the quality of the text based on its fluency. We produced
120 pairs of text 150 Llama2 tokens long and with the same prefix. One from each pair was generated by base Llama2-13B, while the other was generated by Llama2-13B attacked by \modelname with a DPO fine-tuned Llama2-7B model against Roberta-large with $\beta=0.1$ for 5 epochs, and the attack ratio $\alpha=1.3$ to balance between the generation utility and the evasion performance. We also include two baselines: one is DPO directly fine-tuned on Llama2-13B against Roberta-large~\citep{nicks2024language}, another is DIPPER Paraphrasing~\citep{krishna2303paraphrasing}. We then ask three human annotators to choose the text with better fluency when presented with each pair. The two texts were presented in a randomized order to the annotators. The results are shown in  Table~\ref{tab:human_eval}. We find that \modelname demonstrates superior attack performance while preserving better text naturalness.
\begin{table}[b]
  \small
  \caption{AUPRCs of detectors on the texts generated by different models on the dataset OpenWebText, WritingPrompts and PubMedQA respectively. }
  \label{tab:benchmark_white_auprc}
  \begin{center}
  
  \centerline{
  \setlength\tabcolsep{0.6pt}
\scalebox{0.85}{
  \begin{tabular}{c|c|cc|cc|cc}
    \toprule
    \multirow{2}{*}{Dataset}& \multirow{2}{*}{Models$\rightarrow $} &\multirow{2}{*}{Llama2-13B} & 
    \modelname
    & \multirow{2}{*}{Llama3-70B} & \modelname

    & \multirow{2}{*}{Mixtral-8x7B}
 & \modelname \\

 & & & 
    (Llama2-7B)
    &  & (Llama3-8B)

    & 
 & (Mistral-7B) \\
 \hline
 \multirow{19}{*}{OpenWebText}
   &\multicolumn{7}{>{\columncolor{gray!20}}c}{The White-box Setting}\\
     &Likelihood& 0.9984 & 0.8705 &0.9983  & 0.9082 & 0.6931&0.3485 \\
    &LogRank&0.9980  & 0.8495 & 0.9971 & 0.8702 & 0.7230 &  0.3456\\
    &LRR& 0.7582 & 0.5755 &0.7099  &\best{0.4870}& 0.8079&\best{0.3435}\\
    &NPR&  0.9971& 0.9400 & 0.9833 & 0.9005 & 0.7911&0.5569\\
    &DNA-GPT& 0.9605 &\best{0.6375}& 0.9890 & 0.8521 &0.6611&0.3233    \\
    &DetectGPT& 0.7690 & 0.6990 & 0.6471 & 0.5828 &0.3890&\worst{0.3454}\\
    &Fast-DetectGPT&   0.9998 &\worst{0.9922}& 1.0000 &\worst{0.9979}&0.9972&0.4277
    \\ 
&\multicolumn{7}{>{\columncolor{gray!20}}c}{The Black-box Setting}\\

&Roberta-base& 0.9593 & 0.7798 & 0.8723 & 0.7286 & 0.7558 &0.5779\\
 &Roberta-large& 0.9454 &\worst{0.8453}& 0.8970 &  0.7305& 0.7941 &\worst{0.7224}\\
    &Likelihood(Neo-2.7)& 0.9459 & 0.6746 & 0.9334 & 0.6951 & 0.6243& 0.3205\\
    &LogRank(Neo-2.7)& 0.9629 & 0.7008 & 0.9467 & 0.6916 & 0.6429&0.3200 \\
    &LRR(Neo-2.7)& 0.9754 &0.7344  &0.9476  &\best{0.6650}& 0.6880 &0.3259\\
    &NPR(Neo-2.7)& 0.8935 & 0.6823 & 0.8904 & 0.7012 &0.6935& 0.3908\\
    &DNA-GPT(Neo-2.7)&0.6011  &0.4097& 0.7713 & 0.6041 &0.5557&0.3223    \\
    &DetectGPT(T5-3B/Neo-2.7)& 0.6692 & 0.5029 & 0.5298 & 0.4397 &0.5370&0.3601    \\
    &Fast-DetectGPT(GPT-J/Neo-2.7)& 0.9928 &0.8046  & 0.9855 &\worst{0.8543}&0.7762&0.3112
\\
&Binoculars(Falcon-7B)& 1.0000 & \best{0.6413} & 0.9990 & 0.7632 & 0.8468& \best{0.3131}\\
\hline\hline
 \multirow{19}{*}{Writing}
    &\multicolumn{7}{>{\columncolor{gray!20}}c}{The White-box Setting}\\
    &Likelihood& 0.9999 & 0.9046 & 1.0000 & 0.9477 & 0.9684&0.5342 \\
    &LogRank& 0.9999 & 0.8921 & 1.0000 & 0.9326 &0.9735& 0.5216 \\
    &LRR& 0.9686 & 0.7884 & 0.9590 &\best{0.5347}& 0.9626&\best{0.4276}\\
    &NPR& 0.9986 & 0.9397 & 0.9962 & 0.8499 &0.9768&0.7603 \\
    &DNA-GPT& 0.9655 &\best{0.6043}& 0.9894 & 0.8678 &0.7892&0.3651    \\
    &DetectGPT& 0.8567 &\worst{0.8112}& 0.8540 & 0.8279 &0.6188&\worst{0.4842}\\
    &Fast-DetectGPT& 0.9999 & 0.8736 & 0.9998 &\worst{0.9843}&0.9908&0.6026
    \\ 
&\multicolumn{7}{>{\columncolor{gray!20}}c}{The Black-box Setting}\\

&Roberta-base& 0.9615 & 0.8202 & 0.8917 & 0.6602 & 0.7637 &0.5944\\
 &Roberta-large& 0.9345 &  0.8202& 0.8474 & 0.6532 & 0.7455 &\worst{0.6748}\\
    &Likelihood(Neo-2.7)& 0.9928 & 0.7470 & 0.9833 & 0.8029 &0.8609& 0.4113 \\
    &LogRank(Neo-2.7)& 0.9943 &  0.7402& 0.9836 & 0.7847 & 0.8643&0.4045 \\
    &LRR(Neo-2.7)& 0.9876 & 0.8656 & 0.9656 &\best{0.4397}&0.8463&0.3629 \\
    &NPR(Neo-2.7)& 0.9134 &\worst{0.8414}& 0.9533 & 0.6547 &0.8332&0.4615 \\
    &DNA-GPT(Neo-2.7)&0.7253  &0.4352&0.8301  & 0.6663 &0.6693&0.3522    \\
    &DetectGPT(T5-3B/Neo-2.7)& 0.8369 & 0.6128 & 0.8351 & 0.6854 &0.6893&0.4075\\
    &Fast-DetectGPT(GPT-J/Neo-2.7)& 0.9981 & 0.7535 & 0.9881 &\worst{0.8335}&0.8904& 0.3656\\
    & Binoculars(Falcon-7B)& 1.0000 & \best{0.5458} & 0.9988 & 0.7519 & 0.9245& \best{0.3465}\\
\hline
    \hline
 \multirow{19}{*}{PubMed}
    &\multicolumn{7}{>{\columncolor{gray!20}}c}{The White-box Setting}\\
 &Likelihood& 1.0000 & 0.9672 & 1.0000 &0.9670&0.9866&0.8725\\
    &LogRank& 0.9999 & 0.9613 &1.0000 &0.9567&0.9888&0.8577 \\
    &LRR& 0.8520 & 0.8257 & 0.9146&0.8118&0.9176&0.6755\\
    &NPR& 0.9769 & 0.9274 & 0.9723&0.8929&0.7603&0.6594\\
    &DNA-GPT& 0.9965 & 0.9360 &0.9733&0.8931&0.9720&0.7620\\
    &DetectGPT& 0.9893 &\worst{ 0.9753}&0.9851&\worst{0.9673}&0.9074&\worst{0.9091}\\
    &Fast-DetectGPT& 0.9859 &\best{0.9173}&0.9891&\best{0.6946}&0.9950&\best{0.4771}\\
&\multicolumn{7}{>{\columncolor{gray!20}}c}{The Black-box Setting}\\

&Roberta-base& 0.6820 &  0.5615& 0.5334 & 0.3424 &0.4545  &0.4079\\
 &Roberta-large& 0.6954 & 0.5737 & 0.5899 &\best{ 0.3510 }& 0.4741 &\worst{0.4343}\\
    &Likelihood(Neo-2.7)&0.9972  &\worst{0.9142}& 0.9926 &\worst{0.8768}&0.9481&0.7570  \\
    &LogRank(Neo-2.7)& 0.9985 &0.9152  &0.9933  & 0.8739 &0.9603& 0.7646 \\
    &LRR(Neo-2.7)& 0.9974 & 0.9020 &  0.9746& 0.8131 &0.9551&0.7475 \\
    &NPR(Neo-2.7)& 0.6687 & 0.5507 & 0.6428 & 0.5017 &0.5306&0.4319 \\
    &DNA-GPT(Neo-2.7)& 0.7065 & 0.6068 & 0.6142 &0.4528  &0.5367&0.3807\\
    &DetectGPT(T5-3B/Neo-2.7)& 0.8473 &0.7562  & 0.8844 &0.6953  &0.6953&0.5656\\
    &Fast-DetectGPT(GPT-J/Neo-2.7)& 0.9999 &  0.9160& 0.9984 & 0.8206 &0.9666&0.6567\\
    &Binoculars(Falcon-7B) & 1.0000 & \best{0.8120} & 0.9986 & 0.7414 & 0.9679& \best{0.6563}\\
    \bottomrule
  \end{tabular}
  }
  }
  
  \end{center}
  \vskip -10pt
\end{table}
\begin{table}[b]
  \small
  \caption{AUPRCs of detectors in on the texts generated by different models in the cross-domain scenarios.  CS$\rightarrow$PHX denotes that the detectors are evaluated on the texts sampled from PHX while the humanized small language model is fine-tuned on CS. PHX$\rightarrow$HSS and EN$\rightarrow$GER follow the same pattern. }
\label{tab:cross_discipline_auprc}
  \begin{center}
  
  \centerline{
  \setlength\tabcolsep{0.6pt}
\scalebox{0.85}{
  \begin{tabular}{c|c|cc|cc|cc}
    \toprule
    \multirow{2}{*}{Dataset}& \multirow{2}{*}{Models$\rightarrow $} &\multirow{2}{*}{Llama2-13B} & 
    \modelname
    & \multirow{2}{*}{Llama3-70B} & \modelname

    & \multirow{2}{*}{Mixtral-8x7B}
 & \modelname \\

 & & & 
    (Llama2-7B)
    &  & (Llama3-8B)

    & 
 & (Mistral-7B) \\
 \hline
 \multirow{19}{*}{CS$\rightarrow$PHX} 
   &\multicolumn{7}{>{\columncolor{gray!20}}c}{The White-box Setting}\\
     &Likelihood &0.9999 & 0.9028&0.9965&0.7530 & 0.6824
&0.4265\\
    &LogRank &0.9983 & 0.8784&0.9848&0.6862 &0.6594 &0.3984
\\
    &LRR &0.4680 & 0.4343&0.3702&0.3403 & 0.5274&0.3340
\\
    &NPR &0.9792 & 0.8876&0.9647& 0.7886&0.7919 &0.6177
\\
    &DNA-GPT & 0.9890& \best{0.7616}&0.9933&\best{0.6208} &0.6312 &0.3773
    \\
    &DetectGPT & 0.5483&\worst{0.5809} &0.4789& 0.4610&0.3666 &\worst{0.3388}
    \\
    &Fast-DetectGPT &0.9972 &0.9839 &0.9999& \worst{0.9800}& 0.8298&\best{0.3256}
    \\ 
&\multicolumn{7}{>{\columncolor{gray!20}}c}{The Black-box Setting}\\

&Roberta-base& 0.8615& 0.7001&0.6816& 0.4681& 0.6969&0.5744\\
 &Roberta-large&0.7883 &0.6516 &0.6173& 0.4363&0.7121 &0.6121\\
    &Likelihood(Neo-2.7)& 0.8469&0.6800 &0.9383& 0.5102&0.4198 &0.3237\\
    &LogRank(Neo-2.7)& 0.8568& 0.6815&0.9362& 0.4679&0.3895 &0.3149 \\
    &LRR(Neo-2.7)&0.8152 & 0.6537&0.8708&0.3733 &0.3686 &0.3089\\
    &NPR(Neo-2.7)&0.5995 &0.5032 &0.6511&0.5387 &0.5927 &0.4432\\
    &DNA-GPT(Neo-2.7)&0.7973 &0.4665 &0.8675&0.4299 &0.5017 &0.3542    \\
    &DetectGPT(T5-3B/Neo-2.7)& 0.3351&\worst{0.3407}&0.3817&\worst{0.3710}& 0.3785&\worst{0.3540}\\
    &Fast-DetectGPT(GPT-J/Neo-2.7)& 0.9728& 0.7826&0.9807& 0.5259&0.4977 &\best{0.3100}\\
    &Binoculars(Falcon-7B)& 0.9848 & \best{0.5682} & 0.9661 & \best{0.4109}  & 0.4870&0.3126
\\
\hline\hline
 \multirow{19}{*}{PHX$\rightarrow$HSS}
&
    \multicolumn{7}{>{\columncolor{gray!20}}c}{The White-box Setting}\\

 &Likelihood& 0.9997&0.9118 &0.9999&0.8960 &0.8057&0.5624 \\
    &LogRank&1.0000 & 0.9224&0.9978&0.8562 &0.8077 &0.5352 \\
    &LRR& 0.8046&0.6941 &0.4602& 0.4202& 0.7554&0.4167\\
    &NPR&0.9993 &\worst{0.9624}&0.9959&0.9519 & 0.9266&0.8403 \\
    &DNA-GPT& 0.9585&\best{0.7985 }&0.9984&\best{0.8320}&0.6860 &0.4240    \\
    &DetectGPT& 0.9133& 0.8215&0.9055& 0.7896&0.5455 &\worst{0.5506}\\
    &Fast-DetectGPT&0.9966 & 0.9377&0.9958&\worst{0.9659}& 0.9686&\best{0.5133}\\
&\multicolumn{7}{>{\columncolor{gray!20}}c}{The Black-box Setting}\\

&Roberta-base&0.8390 &\worst{0.7328 }&0.7057&\worst{0.6085}&0.6380 &0.5419\\
 &Roberta-large& 0.8422& 0.7272&0.7631& 0.5784&0.6832 &\worst{0.6084}\\
    &Likelihood(Neo-2.7)&0.9599 &0.7121 &0.9355&0.6720 &0.6370 &0.4104 \\
    &LogRank(Neo-2.7)& 0.9647&0.7821 &0.9334& 0.6337&0.6259 &0.3846 \\
    &LRR(Neo-2.7)&0.9511 & 0.7550&0.8927&\best{0.5139}& 0.6022&0.3490\\
    &NPR(Neo-2.7)& 0.9444&0.8043 &0.9177& 0.7816&0.8164 &0.6672 \\
    &DNA-GPT(Neo-2.7)&0.8803 & 0.6267&0.9453& 0.6685&0.6178 &0.3734 \\
    &DetectGPT(T5-3B/Neo-2.7)& 0.7280&0.6249 &0.7580& 0.6533&0.6275 &0.5235\\
    &Fast-DetectGPT(GPT-J/Neo-2.7)&0.9909 & 0.8393&0.9858& 0.7171& 0.7826&0.5133\\
    &Binoculars(Falcon-7B)& 0.9957 & \best{0.6370} & 0.9990 & 0.5766 & 0.7493& \best{0.3382}\\
\hline\hline
 \multirow{19}{*}{EN$\rightarrow$GER}
&
    \multicolumn{7}{>{\columncolor{gray!20}}c}{The White-box Setting}\\
  &Likelihood&0.9903 & 0.5519&0.9257& 0.4742& 0.4865&0.3569 \\
    &LogRank& 0.9824&\best{0.5116}&0.9132&\best{0.4382}& 0.5271&0.3568 \\
    &LRR&0.5583 & 0.3338&0.6446&0.3439 & 0.7028&0.3875\\
    &NPR& 0.9434& 0.5346&0.5096&\worst{0.5000}& 0.7343&0.5096 \\
    &DNA-GPT&0.9918 & 0.5513&0.9788&0.5181 & 0.5942&0.3284    \\
    &DetectGPT&0.7913 & 0.5286&0.6529&0.4566 & 0.5790&\worst{0.5050}\\
    &Fast-DetectGPT&0.9917 &\worst{0.7943}&0.9197&0.5133 &0.9546 &\best{0.4053}\\
&\multicolumn{7}{>{\columncolor{gray!20}}c}{The Black-box Setting}\\
&Roberta-base&0.4993 &\worst{ 0.4934}&0.5135&0.4001 & 0.4403&\worst{0.3984}\\
 &Roberta-large&0.4970 & 0.4496&0.4974&\worst{0.3958}& 0.4760&0.3946\\
    &Likelihood(Neo-2.7)& 0.9903&\best{0.3170 }&0.4616& 0.3392& 0.3980&0.3480 \\
    &LogRank(Neo-2.7)&0.9824& 0.3185 & 0.5023&0.3403 &0.4064 &0.3480 \\
    &LRR(Neo-2.7)&0.8099 & 0.3632&0.7743&0.3545 & 0.4954&0.3579 \\
    &NPR(Neo-2.7)&0.5762 & 0.3457&0.5351& 0.3788& 0.4649&0.3640 \\
    &DNA-GPT(Neo-2.7)& 0.7324& 0.3303&0.8468 &0.3765& 0.4822 & 0.3257\\
    &DetectGPT(T5-3B/Neo-2.7)&0.4876 &0.3394 &0.4880 &0.3773 & 0.4238&0.3560\\
    &Fast-DetectGPT(GPT-J/Neo-2.7)& 0.8955&0.3986 &0.7648& 0.3720& 0.5299&0.3221\\
    &Binoculars(Falcon-7B)& 0.9951 & 0.3786 & 0.9922 & \best{0.3814} & 0.7244 & \best{0.3315} \\
    \bottomrule
  \end{tabular}
  }
  }
  \end{center}
  \vskip -10pt
\end{table}

\section{Sensitivity of Training Sizes}
\label{apd:train_size}
The SLM is fine-tuned using DPO, and the resulting model is influenced by the size of the training data. 
Consequently, the training data size affects the performance of the  attacked LLM in evading detection. To obtain a effective humanized SLM, a larger training size is desirable.
  However, increasing the training size requires more time for fine-tuning. To explore the impact of training size on both time efficiency and detection evasion performance, we fine-tuned a Llama2-7B model against Roberta-large on the OpenWebText dataset using varying training sizes. \begin{wraptable}{r}{7.5cm}
\vspace{-0.3cm}
  \small
  \caption{Performances on different training size.}
  \label{tab:auroc_time}
  \begin{center}
  \centerline{
  \setlength\tabcolsep{0.9pt}
  
  \begin{tabular}{c|c|c}
\toprule
Training Size&  AUROC &Time (hrs)   \\\hline
1K&  0.88 &0.27\\
5K& 0.71 &1.35 \\
8K& 0.68 & 1.90 \\
10K& 0.62 &2.04\\
\bottomrule
  \end{tabular}
  }
  \end{center}
\end{wraptable}We use LoRA to perform DPO fine-tuning on the SLM with $\beta=0.1$, and batch size of 8 for 5 epochs. We varying the training size and record the fine-tuning runtime. We evaluate RoBERTa-base detector on text generated by the attacked Llama2-13B model with $\alpha=1.5$, and the results are in Table~\ref{tab:auroc_time}. We find that as the training size increases, the performance of detection decreases, while the fine-tuning time grows. This suggests a trade-off between efficiency and performance: increasing the training size improves evasion but reduces efficiency due to longer fine-tuning times. If prioritizing evasion performance, a larger training size might be preferable. On the contrary, if efficiency or fine-tuning time is more critical, a smaller training size provides a better balance. 

\section{Analysis of Scoring Detector}
\label{apd:scoring_detector}
\begin{wrapfigure}{r}{0.45\textwidth} \vspace{-0.4cm}
\centering
\includegraphics[width=0.18\textheight]{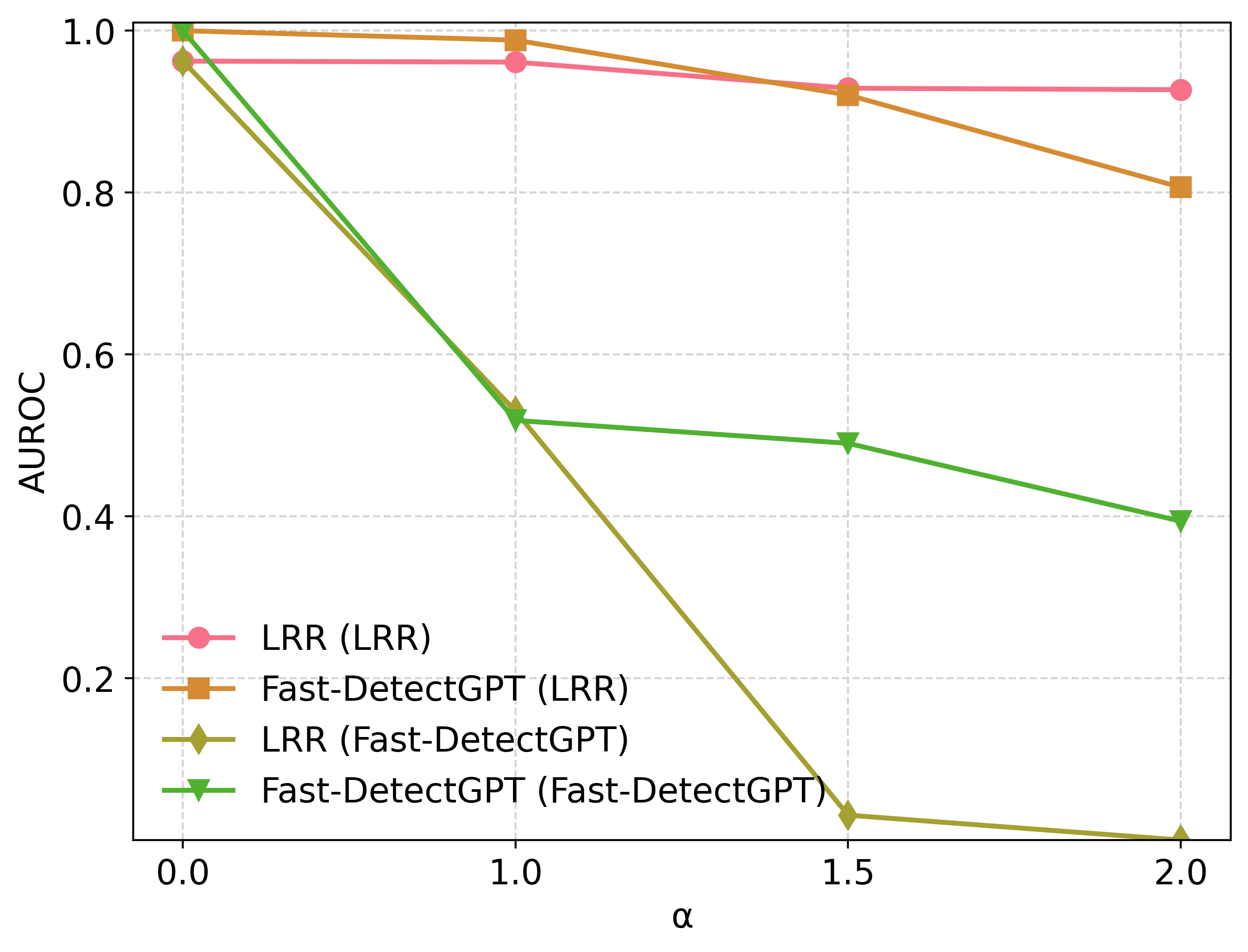}
\vspace{-0.4cm}
\caption{The performances of detectors evaluated on text generated by the attacked Llama2-13B. Each line in the figure represents a combination of an evaluation detector and a scoring detector, denoted as `Evaluation Detector (Scoring Detector)'.}
\vspace{-0.3cm}
\label{fig:scoring}
\end{wrapfigure}
The scoring detector plays an important role in obtaining the reward. 
Theoretically, a weak detector yields relatively low reward for human-generated texts, while machine-generated texts are given relatively high reward, resulting in a reduced gap between the two. Hence an attacker can more easily bypass the detector and overfit to the weak scoring detector. Therefore at deployment, when faced with a strong target detector, the attacker's performance will suffer. 

We conduct an empirical analysis of the impact of different scoring detectors on the WritingPrompts dataset. Specifically, we use the white-box LRR and Fast-DetectGPT as scoring detectors when constructing the preference data for DPO fine-tuning. Using these two preference datasets, we fine-tune Llama2-7B with $\beta=0.1$ to obtain two humanized SLMs as attackers.  We then evaluate the white-box LRR and Fast-DetectGPT on the text generated by the attacked Llama2-13B model with varying levels of attack ratio $\alpha$ in Figure~\ref{fig:scoring}. We find that when LRR is used as the scoring detector, the attacked Llama2-13B model exhibits a moderate performance drop. In contrast, when Fast-DetectGPT is used for scoring, the performance drops are huge. An SLM fine-tuned against a weak scoring detector cannot perform well in the face of a strong target detector, though we surmise this gap may be simply due to the lack of high-quality human-machine labels without access to strong scoring detectors and may hence be unavoidable. In practice, although there is no assumption which specific scoring detector the attacker should choose, a strong, well-calibrated detector is desirable for this purpose.

\section{Analysis of Small Model Size}
\label{apd:model_size}
\begin{wraptable}{r}{4.5cm}
\vspace{-0.7cm}
  \small
  \caption{Performance with different size of SLM.}
  \label{tab:model_size}
  \begin{center}
  \centerline{
  \setlength\tabcolsep{0.5pt}
  
  \begin{tabular}{c|c}
\toprule
Model &AUROC \\\hline
Llama2-13B & 0.9673\\
HUMPA(Llama2-7B) & 0.4594 \\
HUMPA(TinyLlama-1.1B) & 0.7706 \\
\bottomrule
  \end{tabular}
  }
  
  \end{center}
  \vskip -10pt
\end{wraptable}

To study the impact of model size, we fine-tune a Llama2-7B and a TinyLlama-1.1B~\citep{zhang2024tinyllama} on WritingPrompts dataset as the SLM respectively. We use RoBERTa-large as the scoring detector and evaluate RoBERTa-base on the text generated by the attacked Llama2-13B model, each attacked with an attack ratio of $\alpha=1.5$. The results are in Table~\ref{tab:model_size}. We find that different SLMs exhibit varying levels of effectiveness when attacking the LLM, with Llama2-7B being the most effective choice. This suggests that when selecting an SLM for such attacks, a stronger SLM tend to demonstrate better evasion performance. However, fine-tuning a larger model and loading it during inference to attack the LLM incurs significant time and memory costs. Practitioners need to balance these factors according to their specific priorities.

\section{Real Cases}
We provide several real examples randomly selected from OpenWebText and the cross-domain scenario PHX$\rightarrow$HSS across different models. Note that, as discussed in Section~\ref{subsec: evade detection}, the parameter $\alpha$ controls the trade-off between attack effectiveness and text generation quality: higher $\alpha$ leads to higher reward according to the detector, but also strays away from pre-trained LLM and thus decays in quality. With this in mind, the following examples are generated with $\alpha=1.5$, which as shown in previous sections can be considered an effective attack. From our observations, no significant qualitative differences are apparent between the generated texts.

\begin{tcolorbox}[colback=white,colframe=gray!50!black,title=Real cases of samples from OpenWebText, breakable,]

\footnotesize
\textbf{Llama2-13B}: 
 DENVER — The New England Patriots could be without two key players for their AFC Championship Game matchup against the Denver Broncos on Sunday.
Tight end Rob Gronkowski and defensive lineman Malcom Brown did not participate in practice on Wednesday, and both have been listed as questionable for the game, according to the Patriots' injury report.
Gronkowski has a knee injury, while Brown has a foot injury.
"It's early in the week, so we'll see how it goes," Patriots coach Bill Belichick said when asked about the status of his players. "We'll evaluate it day by day and see where we're at by the end of the week."
If Gronkowski and Brown are unable to play, it would be a significant loss for the Patriots. Gronkowski is one of the team's top receiving threats

\textbf{\modelname (Llama2-7B)}:
DENVER — The New England Patriots could have the luxury of having all six of their top wide receivers available for Sunday’s AFC Championship Game against the Denver Broncos.
Julian Edelman, who has missed the last two games with a knee injury, was a full participant in practice on Thursday and is expected to play, a source told ESPN.
The Patriots also expect to have Chris Hogan (shoulder) and Malcolm Mitchell (knee) back in the lineup, after both players missed last week’s divisional round playoff game against the Houston Texans.
The only receiver who did not participate in practice on Thursday was Danny Amendola, who is dealing with a thigh injury. However, he is expected to play on Sunday, per the source.
The Patriots have a deep and talented receiving corps, and having all six players available could

\begin{center}
    \begin{tikzpicture}
      \draw[dashed] (0,0) -- (\linewidth,0);
    \end{tikzpicture}
  \end{center}
\footnotesize

\textbf{Llama3-70B}:

On Tuesday, it was reported that New York state lawmakers had reached a deal for a \$175 billion budget that would increase taxes on the wealthy and large corporations. [newline] On Wednesday, Governor Cuomo signed the budget into law.
[newline] According to Cuomo, the new budget would increase the income tax rate for individuals earning over \$1 million per year and corporations earning over \$5 million per year. Additionally, it would implement a new tax on luxury real estate in New York City, and would establish a new tax on opioid manufacturers.
[newline] Cuomo said in a statement, "This budget is a testament to what can be accomplished when lawmakers work together to benefit the people of this great state."
Some critics argued that the new taxes could lead to wealthy individuals and corporations leaving the state. The Business Council of New York State said in a statement, "Increasing taxes on our most successful entrepreneurs and businesses will only encourage them to take their investments and jobs to more competitive states."
However, supporters of the budget argued that it would

\textbf{\modelname (Llama3-8B)}: 

On Tuesday, it was reported that New York Giants tight end Daniel Bellinger was arrested following an incident at a dinner party at the home of one of his teammates. According to the report, Bellinger allegedly got into a physical altercation with one of the party's attendees. The team has since released a statement regarding the situation.
On Wednesday, the NFLPA announced that it will be conducting its own investigation into the allegations surrounding Bellinger's arrest. This move is likely in anticipation of potential discipline from the NFL, which could potentially stem from the league's personal conduct policy.
[newline] Bellinger's lawyer has since released a statement claiming that his client was acting in self-defense during the alleged altercation.
[newline] Update: TMZ has released video footage of the altercation, which appears to show Bellinger allegedly punching and kicking a man multiple times. On Thursday, it was reported that the NFL has placed Bellinger on the commissioner's exempt list, effectively suspending him pending further investigation.
Should Bellinger ultimately be suspended or disciplined





  \begin{center}
    \begin{tikzpicture}
      \draw[dashed] (0,0) -- (\linewidth,0);
    \end{tikzpicture}
  \end{center}
\footnotesize

\textbf{Mixtral-8$\times$7B:}

After weeks of negotiations, National and Labour finally have a deal in the books to bring the Electoral Transformation Bill to fruition for an imminent vote in parliament.
In a statement the two political parties issued Thursday afternoon, both said talks on the topic have been “constructive and carried out in a positive and courteous way with a good willingness to reach an agreement.”
The agreement they finally reached, per the statement, allows for Māori to put forward three different electoral models to be put to a binding citizens-initiated referendum.
If any one of them are successful, then “the number of members from the Māori electoral specials seats, in total, would be fixed at 120\%, of the MPs who would be returned with an equivalent proportion of voters in the general electoral specials seats.”
The two parties will put forward a joint motion of am

\textbf{\modelname (Mistral-7B):}

After weeks of negotiations, National and Labour have thrashed out agreement on legislation that will give parties less money and force (some) MPs to live more normal lives.
Labour anger at National's insistence that there must be correlation between private and public sector pay rises and a feeling the party didn't push hard enough on funding for MPs did take its toll. But relationships appear to be good and the government suggested the parties could work together on other bills - although not one on alcohol, about which the Greens have similar gripes.
The crowningpiece - literally for Helen Clark and Chester Borrows who travelled up to Parliament flanked by placard carriers demanding greater gender equality in elecitons - was an amendment to Clause 5 of the Electoral Act.
This allows parties in a coalition or cumulative vote deal (eg parties laddering candidates in a district single-preference system) to

\end{tcolorbox}

\begin{tcolorbox}[colback=white,colframe=gray!50!black,title=Real cases of samples from PHX$\rightarrow$HSS, breakable,]

\footnotesize
\textbf{Llama2-13B}: 
Introduction to the Special Issue on Diversion from Standard Prosecution in the Criminal Justice System [newline] Introduction: The special issue on diversion from standard prosecution in the criminal justice system explores the various ways in which criminal cases can be diverted from the traditional prosecution process. Diversion programs offer an alternative to prosecution and punishment, and can be an effective tool for addressing the root causes of criminal behavior and promoting rehabilitation. This issue brings together a range of articles and research papers that examine the different forms of diversion, their benefits and challenges, and the impact they have on the criminal justice system. [newline] The articles in this special issue cover a range of topics related to diversion from standard prosecution, including the different types of diversion programs available, the factors that influence their effectiveness, and the role of diversion in addressing issues of racism and inequality in the criminal justice system.

\textbf{\modelname (Llama2-7B)}:
Introduction to the Special Issue on Diversion from Standard Prosecution: Toward a More Holistic and Effective Approach to Addressing Crime [newline] This special issue of the journal devotes attention to diversion from standard prosecution, a topic that has gained increasing importance in the criminal justice system. Diversion programs have been developed as an alternative to traditional criminal prosecution, with the goal of addressing the root causes of criminal behavior and reducing recidivism rates. These programs have been shown to be effective in addressing various types of crimes, including drug offenses, property crimes, and even violent crimes. [newline] The articles in this special issue explore various aspects of diversion from standard prosecution, including the effectiveness of diversion programs, the factors that contribute to their success, and the challenges and limitations that must be addressed in order to realize their full potential. [newline] One of the key themes that emerges from

\begin{center}
    \begin{tikzpicture}
      \draw[dashed] (0,0) -- (\linewidth,0);
    \end{tikzpicture}
  \end{center}
\footnotesize

\textbf{Llama3-70B}:
Acceptability of App-Based Contact Tracing for COVID-19: Cross-Country Survey Evidence [newline] **Title:** Acceptability of App-Based Contact Tracing for COVID-19: Cross-Country Survey Evidence [newline] **Background:** [newline] The COVID-19 pandemic has led to unprecedented global efforts to control the spread of the virus. One promising approach is digital contact tracing, which uses mobile apps to rapidly identify and notify individuals who have been in close proximity to someone infected with COVID-19. However, the success of such apps relies heavily on public acceptance and adoption. [newline] **Objective:** [newline] This study aims to investigate the acceptability of app-based contact tracing for COVID-19 across different countries and identify factors influencing public willingness to use such apps. [newline] **Methods:** [newline] We conducted a cross-country online survey among 10,000 participants from 15 countries, including the United States, United Kingdom, Australia, Canada, Germany, France

\textbf{\modelname (Llama3-8B)}: 
Acceptability of App-Based Contact Tracing for COVID-19: Cross-Country Survey Evidence [newline] Abstract: [newline] Contact tracing is a crucial component in controlling the spread of COVID-19. The use of app-based contact tracing has been widely adopted globally, but its acceptability varies across countries. This study aimed to investigate the acceptability of app-based contact tracing and its correlates across countries. [newline] Methods: [newline] We conducted a cross-country survey of 13,993 adults from 20 countries. The survey included questions on demographic characteristics, COVID-19-related experiences, and attitudes towards app-based contact tracing. We used multivariable logistic regression to identify factors associated with acceptability. [newline] Results: [newline] The overall acceptability of app-based contact tracing was 63.1\%. Country-wise, the acceptability ranged from 44.1\% in Japan to 84.5\% in China. In multivariable analysis, factors associated with higher acceptability included being male, younger age, higher education, previous COVID-19 
  \begin{center}
    \begin{tikzpicture}
      \draw[dashed] (0,0) -- (\linewidth,0);
    \end{tikzpicture}
  \end{center}
\footnotesize

\textbf{Mixtral-8$\times$7B:}
Put More Women in Charge and Other Leadership Lessons from COVID-19 Women face significant, unique, and disproportionate risks, impacts, and challenges relative to COVID-19, both as a result of public health measures to stem the spread of the virus and from the crisis itself. However, evidence from around the world also suggests that despite facing more severe adversity, women have been at the forefront of pandemic response efforts, often leveraging their networks, knowledge, and experience forged in other crisis contexts, to take action swiftly and creatively to help those in need. Drawing on extensive global research, this paper examines the critical roles that women at all levels, from civil society to the highest corridors of power, have played in pandemic response and recovery efforts. The authors present a series of case study narratives to demonstrate that the success of COVID- response has been dependent in significant part on women in leadership, and in

\textbf{\modelname (Mistral-7B):}
Put More Women in Charge and Other Leadership Lessons from COVID-19 [newline] While COVID-19 has thrown a curveball at businesses, some are managing to fight through quite effectively. New Zealand saw lockdowns lift a lot earlier than most. It’s going to open up again, sooner rather than later, and this is attributed, to a large extent due to the exceptional leadership. [newline] This begs the question – What makes exceptional leadership stick?
The difference is found at the intersection of love and power. Historically, women have better skills and hold the power in matters of relationship, empathy, compassion, resilience and what Danial Goleman calls Social Intelligence at work or where EI-EQ meet. The good news is that infusing these skills into the workplace can only benefit an organisation – but is challenge at times to balance these with ''masculine'' behaviour based on transactional power dynamics we are used to.

\end{tcolorbox}

\end{document}